\newtheorem{definition}{Definition}
\newtheorem{proposition}{Proposition}
\newtheorem{theorem}{Theorem}
\newtheorem{assumption}{Assumption}
\newtheorem{lemma}{Lemma}
\newtheorem{example}{Example}
\definecolor{easyorange}{RGB}{254, 106, 44} 
\definecolor{easyred}{RGB}{211, 93, 110}
\definecolor{TRCA-logique-dis}{RGB}{27,158,119}
\definecolor{TRCA-logique-mixte}{RGB}{217,95,2}
\definecolor{TRCA-PC}{RGB}{117,112,179}
\definecolor{CIRCA}{RGB}{231,41,138}
\definecolor{EasyRCA}{RGB}{102,166,30}
\definecolor{CloudRange}{RGB}{230,171,2}
\definecolor{MicroCause}{RGB}{166,118,29}
\begin{document}
	
	\title{On the Fly Detection of Root Causes from Observed Data with Application to IT Systems}

	
	\author{\href{https://orcid.org/0000-0003-4695-5059}{\includegraphics[scale=0.06]{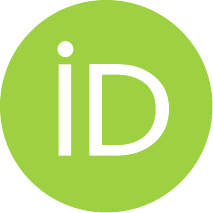}\hspace{1mm}Lei Zan} \\ Univ. Grenoble Alpes, CNRS, \\Grenoble INP, LIG,\\ EasyVista\\
		F38000, Grenoble, France
		\And 
		\href{https://orcid.org/0000-0003-3571-3636}{\includegraphics[scale=0.06]{orcid.pdf}\hspace{1mm}Charles K. Assaad} \\ Sorbonne Université, INSERM,\\ Institut Pierre Louis d'Epidémiologie\\ et de Santé Publique,\\
		F75012, Paris, France
		\And
		\href{https://orcid.org/0000-0002-8360-1834}
		{\includegraphics[scale=0.06]{orcid.pdf}\hspace{1mm}Emilie~Devijver} \\Univ. Grenoble Alpes, CNRS,\\ Grenoble INP, LIG,\\
		F38000, Grenoble, France
		\And 
		\href{https://orcid.org/0000-0002-8858-3233}{\includegraphics[scale=0.06]{orcid.pdf}\hspace{1mm}Eric Gaussier} \\ Univ. Grenoble Alpes, CNRS,\\ Grenoble INP, LIG\\
		F38000, Grenoble, France
		\And 
		\href{https://orcid.org/0009-0004-2715-7363}{\includegraphics[scale=0.06]{orcid.pdf}\hspace{1mm}Ali Aït-Bachir} \\ EasyVista\\
		F38000, Grenoble, France
	}
	
	\date{}
	
	\maketitle
	
	\begin{abstract}
		This paper introduces a new structural causal model tailored for representing threshold-based IT systems and presents a new algorithm designed to rapidly detect root causes of anomalies in such systems. When root causes are not causally related, the method is proven to be correct; while an extension is proposed based on the intervention of an agent to relax this assumption. 
		Our algorithm and its agent-based extension leverage causal discovery from offline data and engage in subgraph traversal when encountering new anomalies in online data. Our extensive experiments demonstrate the superior performance of our methods, even when applied to data generated from alternative structural causal models or real IT monitoring data.  
	\end{abstract}


	\section{Introduction}\label{sec:intro}
	IT monitoring systems are described by metrics, as CPU usage, memory usage, or network traffic, and represented by continuous observational time series. 
	In threshold-based IT monitoring systems, predefined thresholds are used to determine when an anomaly or an alert should be triggered~\citep{Ligus_2013}, where the thresholds are set manually or through algorithms leveraging offline (i.e., historical) data~\citep{dani2015adaptive}.
	In IT systems with multiple interconnected subsystems, several metrics may go into an anomalous state during an incident. In this context, root cause analysis consists on identifying actionable root causes of the anomalies that can be used to resolve the incident. This process is crucial for mitigating impacts like significant financial losses during system outages~\cite{bajak2021aws}.
	
	Causal graphs can help infer root causes, but obtaining them from experts is often impractical. Causal discovery methods~\citep{Spirtes_2000, Assaad_2022} aim to infer these relations, yet traditional approaches depend on untestable assumptions, need large data sets, and are unsatisfactory for real-world IT monitoring~\citep{Ait_Bachir_2023}. This is especially true when causal relations are event-driven rather than continuous.
	%
	
	%
	Understanding causal relations between binary time series offers clearer insights than using raw time series alone. In this paper, we transform raw time series into binary series using thresholds, then discover a causal graph from these binary series, showing causal relationships between threshold crossings. Root causes are then detected using graph traversal techniques.
	Our contributions are summarized as follows:
	\begin{enumerate}
		\item We propose a structural causal model for anomaly propagation in threshold-based IT monitoring systems, incorporating two distinct noise terms to simulate signal dissipation and external interventions, and providing insight into the event-based nature of anomaly propagation.
		\item We introduce the T-RCA framework for detecting root causes, capable of identifying all true root causes under certain assumptions, and provide an empirical solution for scenarios where one assumption is not met.
		\item We conduct extensive experiments with synthetic and real data to evaluate the method.
	\end{enumerate}
	The remainder of this paper is organized as follows: Section~\ref{sec:sota} discusses related work. Section~\ref{sec:TDSCM} introduces a new structural causal model for threshold-based IT systems. Section~\ref{sec:TRCA} presents an algorithm to detect root causes of anomalies in these systems. In Section~\ref{sec:experiments}, the method is compared to others on simulated and real datasets. 
	Finally, Section~\ref{sec:discussion} concludes the paper.
	
	\section{State of the Art}\label{sec:sota}

	\begin{figure*}[t!]
		\begin{tabular}[c]{cc}
			\multirow{2}{*}[4.cm]{
				\begin{subfigure}{.5\textwidth} 
					\centering
					\includegraphics[trim = 0 1cm 0 1cm, clip=TRUE]{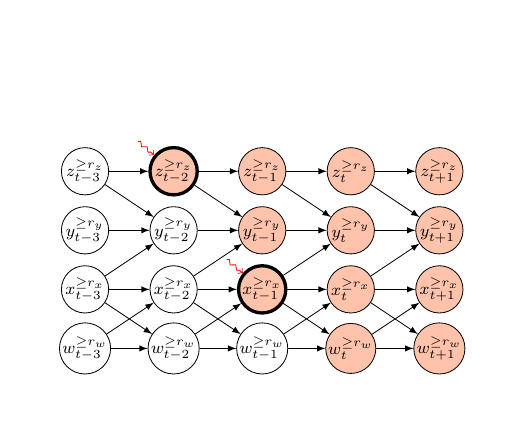} 
					\caption{Finite Threshold-based Full-time Causal Graph, T-FTCG. Orange vertices are anomalies,  thick border vertices are root causes, and the variable $i$ is  displayed by squiggly red arrows when equal to 1.}
					\label{fig:window-graph}
				\end{subfigure}
			}& 
			\begin{subfigure}{.45\textwidth}
				\centering
					
					
					
					
				\includegraphics[trim = 0 1cm 0 1cm, clip=TRUE]{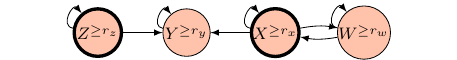}
				\caption{Threshold-based Summary Causal Graph, T-SCG. Orange vertices are anomalies and thick border vertices are root causes.}
				\label{fig:summary-graph}
			\end{subfigure}\\
			&
			\begin{subfigure}{.41\textwidth}
				\centering 
				\includegraphics[scale=0.8, trim = 0 1cm 0 1cm, clip=TRUE]{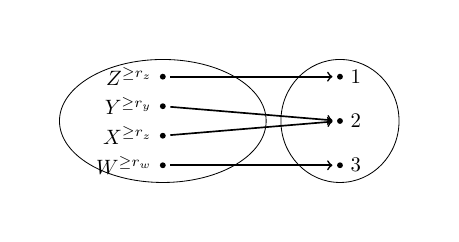}
				\caption{Mapping for the appearance time of anomalies.}
				\label{fig:appearance}
			\end{subfigure}
		\end{tabular}
		\caption{Example. Illustration of (a) a T-FTCG, (b) a T-SCG and (c) the mapping for the appearance time of anomalies on a system with four variables. }
		\label{fig:diamond}
	\end{figure*}%
	
	Considerable research efforts have recently focused on automated root cause analysis. Some methods utilize causal discovery to uncover causal graphs from anomalous data. For instance, CloudRanger \citep{wang2018cloudranger} employs the PC algorithm \citep{Spirtes_2000} to discover causal graphs among anomalous time series, then identifies root causes using a random walk strategy based on transition matrices calculated from time series correlations. However, the PC algorithm was not originally designed for time series data, and correlations may not fully capture authentic causal effects. MicroCause \citep{Meng_2020} addresses these concerns by utilizing the PCMCI algorithm \citep{Runge_2019} for discovering the causal graph and partial correlations instead of correlations. 
	
	Other approaches focus on detecting root causes given the true causal graph of the normal regime. For example, EasyRCA \citep{Assaad_2023} identifies some root causes directly from the graph and detects the rest by comparing direct effects in normal and anomalous regimes. CIRCA \citep{Li_2022} employs a service through a graph structure and conducts a regression hypothesis test on anomalous data to identify deviations. However, these methods require a graph as input, which may not always be available.
	
	In event-driven systems, event-based causal relations may be more useful. \cite{van2021root} introduced a method, called AITIA-PM, for identifying root causes within event logs \citep{rudnitckaia2016process}. This approach leverages a test hypothesis that considers root causes as variables with the highest conditional dependence with anomalies. However, this method is not sound theoretically. Similarly, RCD \citep{RCD} focuses on discretizing data and identifying root causes through the PC algorithm. However, similarly to CloudRanger and MicroCause, it needs to run the causal discovery algorithm each time an anomaly arises.
	
	From a broader perspective,  \cite{Wang2023reason} has proposed to model complex systems with interdependent network structures, and detect root causes using hierarchical graph neural networks. 
	\cite{HRLHF} has used some human feedback in a reinforcement learning fashion to reduce the number of queries. 
	Some methods have also been proposed for inferring root causes of anomalies for non-temporal data, including \cite{Budhathoki_2021,Budhathoki_2022}. Those studies are beyond the scope of this paper.


	\section{Threshold-based causal graphs and root causes}
	\label{sec:TDSCM}

	In this section, we present key concepts and assumptions. Lowercase letters represent observed variables, uppercase letters denote name-values or time series, blackboard bold letters indicate sets, and Greek letters represent constants. We denote  $\mathbb{1}_A$ as the indicator function of the event $A$, ${\mathcal{G}}$ as a graph, $\mathcal{B}$ the Bernoulli distribution, $\mathrm{Pa}_{\mathcal{G}}(X)$, $\mathrm{An}_{\mathcal{G}}(X)$ and $\mathrm{Desc}_{\mathcal{G}}(X)$ as the sets of parents, ancestors and descendants of a vertex $X$ in ${\mathcal{G}}$, respectively. 
	
	In IT systems, the data associated to diverse components of the system is commonly gathered in the form of time series. 
	\begin{definition}[Time series]
		For $t\in \mathbb{N}$, consider the random variable $x_t \in [0, 1]$. The sequence $\mathcal{X}=\{x_t; t \in \mathbb{N}\}$ is called a \emph{discrete time series}. 
		Let $\mathbb{V}$ be the set of name-values of $d$ different discrete time series in a system,  $\mathbb{T}=\{\mathcal{X}=\{x_t; t \in \mathbb{N}\}; X \in \mathbb{V}\}$ is called a \emph{$d$-dimensional discrete time series}.
	\end{definition}
	
	In practical scenarios, time series within IT systems are not observed across an infinite set of time points due to operational constraints, such as specific timeframes, resource limits, data storage, and monitoring capabilities with finite recording capacities or designated data collection periods. Subsequently, we consider discrete-time time series with continuous values unless explicitly stated otherwise.
	
	In many IT systems, establishing a  causal connection between two time series $\mathcal{X}$ and $\mathcal{Y}$ is not evident at every time step. An anomaly in $\mathcal{X}$ can result in an anomaly in $\mathcal{Y}$ when a specific time point $x_t \in\mathcal{X}$ exceeds a predefined threshold, triggering a corresponding time point $y_t \in \mathcal{Y}$ to breach its own threshold. For example, changes in network traffic may not impact firewall alerts at each time point, but when network traffic surpasses a threshold, it could indicate a security threat, increasing firewall alerts to signal potential intrusion attempts. In such cases, relying solely on raw time series is challenging, requiring binary thresholding for time series.
	
	\begin{definition}[Binary thresholding of time series]
		Consider a discrete time series $\mathcal{X}=\{x_t; t \in \mathbb{N}\}$ and a fixed threshold $r_x\in [0, 1]$. A \emph{binary thresholding of $\mathcal{X}$} is the sequence $\mathcal{X}^{\ge r_x}=\{\mathbb{1}_{x_t\ge r_x}; t \in \mathbb{N}\}$.
	\end{definition}
	Each binary random variable in the sequence $\mathcal{X}^{\ge r_x}$ is represented as $x_t^{\ge r_x}$, where $t \in \mathbb{N}$, and the binary thresholding of a $d$-dimensional time series $\mathbb{T}$ is the vector comprising the binary thresholdings of the respective time series.
	%
	The subsequent definition introduces the concept of a causal graph over binary thresholded $d$-dimensional time series, illustrated in Figure \ref{fig:window-graph}.
	
	\begin{definition}[Threshold-based full-time causal graph, T-FTCG]
		\label{def:t-FTCG}
		Let $\mathbb{T}$ be a $d$-dimensional time series in a system, $\mathbb{V}$ the set of their name-values, and $r\in \mathbb{R}^d$ a vector of thresholds. A \emph{threshold-based full time causal graph} $\mathcal{G}_{\text{ft}}=(\mathbb{T}^r, \mathbb{E}_{\mathbb{T}}^r)$ is an infinite directed acyclic graph where the set of vertices $\mathbb{T}^r$ corresponds to the set of variables in the binary thresholding of $\mathbb{T}$ and where the set of edges $\mathbb{E}_{\mathbb{T}}^r$ is defined as follows:  for two time series $\mathcal{X},\mathcal{Y} \in \mathbb{T}$, $\forall x_{t-\gamma} \in \mathcal{X}$, and $\forall y_t \in \mathcal{Y}$,  $x_{t-\gamma}^{\ge r_x} \rightarrow y_{t}^{\ge r_y}$ in $\mathbb{E}_{\mathbb{T}}^r$ if and only if $x_{t-\gamma}^{\ge r_x}$ causes $y_t^{\ge r_y}$ at time $t$ with a time lag of $\gamma >0$ (no instantaneous causal relations).
	\end{definition}
	
	%
	To establish a connection between the T-FTCG and the observational data, we adopt the following standard assumptions. 
	
	\begin{assumption}[Causal Markov condition]
		\label{assumption:cmc}
		Let $\mathcal{G}_{\text{ft}}=(\mathbb{T}^r,\mathbb{E}_{\mathbb{T}}^r)$ be a T-FTCG. For each  thresholded time series $\mathcal{X}^{\geq r_x} \in \mathbb{T}^r$, each vertex $x_t^{\ge r_x}$ is independent of its non descendants in $\mathcal{G}_{\text{ft}}$ given its parents.
	\end{assumption}

	\begin{assumption}[Adjacency faithfulness]
		\label{assumption:adjacency_faithfulness}
		Let $\mathcal{G}_{\text{ft}}=(\mathbb{T}^r,\mathbb{E}_{\mathbb{T}}^r)$ be a T-FTCG. Each two adjacent vertices are statistically dependent given any set of vertices.
	\end{assumption}

	While a T-FTCG only illustrates the causes of an effect, the threshold-based dynamic structural causal model (an adaptation of the structural causal models introduced in \cite{Pearl_2000} to threshold-based systems) describes how an effect is quantitatively influenced by its causes.

	\begin{definition}[Threshold-based dynamic structural causal model, T-DSCM]
		\label{def:tdscm}
		A \emph{threshold-based dynamic structural causal model}  associated with a T-FTCG $\mathcal{G}_{\text{ft}}=(\mathbb{T}^r, \mathbb{E}_{\mathbb{T}}^r)$  is a quadruple $\mathcal{M}=\langle (\mathbb{U}_t, \mathbb{I}_t), \mathbb{T}^r, {f}, (P(u_t), P(i_t)) \rangle$ where 
		\begin{enumerate}
			\item  $\mathbb{U}_t$ and $\mathbb{I}_t$ are two sets of $d$-dimensional binary background time series (also called exogenous time series), such that  $u_t^y\in U^y_t \in \mathbb{U}_t$ and $i^y_t\in I_t^y \in \mathbb{I}_t$ are determined by factors outside the model, for $Y\in \mathbb{V}$ and $t\in \mathbb{N}$;
			\item $\mathbb{T}^r$ is a $d$-dimensional binary observed time series (also called endogenous time series), such that each binary variable $y_t^{\ge r_y}\in \mathcal{Y}^{\ge r_y}\in \mathbb{T}^r$ is determined by variables in the model, for $t\in \mathbb{N}$; 
			\item for $\mathcal{Y}^{\ge r_y} \in \mathbb{T}^r$ and $t \in \mathbb{N}$, the structural mapping function is given by, for $u_t^y\in U^y_t\in \mathbb{U}_t$ and $i^y_t\in I_t^y\in \mathbb{I}_t$,
			\begin{align*}
				y_t^{\ge r_y} &:= f(Pa_{\mathcal{G}_{\text{ft}}}(y_t^{\ge r_y}),u^{y}_t ,i^{y}_t)\\
				&:= \left(\left(\bigvee_{x_{t-\gamma}^{\ge r_x}\in Pa_{\mathcal{G}_{\text{ft}}}(y_t^{\ge r_y})} x_{t-\gamma}^{\ge r_x}\right) \wedge u^{y}_t\right) \vee i^{y}_t;
			\end{align*}
			\item for all $u^{y}_t \in U^y_t \in \mathbb{U}_t$, $P(u^{y}_t)\sim\mathcal{B}(\epsilon^y_t)$ with $\epsilon^y_t \in (0,1]$ and for all $i^{y}_t \in I_t^y \in \mathbb{I}_t$,  $P(i^{y}_t)\sim\mathcal{B}(\beta^y_t)$ with $\beta^y_t \in (0,1)$.
		\end{enumerate}
	\end{definition}
	
	\begin{figure*}
		\centering
		\includegraphics[width=0.8\textwidth, trim = 1.6cm 3.05cm 1.1cm 2.8cm, clip=TRUE]{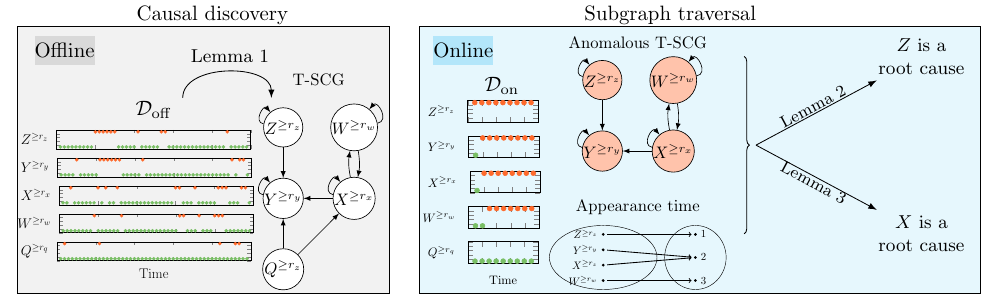}
		\caption{Overview of T-RCA. First step, on the offline dataset: a T-SCG is learned from $\mathcal{D}_{\text{off}}$. Second step, the anomalous T-SCG is deduced from the online dataset, as well as the appearance time. Last step, detection of the root causes using Lemmas \ref{lemma:root_causes_forwards} and \ref{lemma:root_causes_SCC}.}
		\label{fig:architecture}
	\end{figure*}
	
	Note that the background variable $u^y_t\sim \mathcal{B}(\epsilon^y_t)$ denotes the non-immunity of $y_{t}^{\ge r_y}$: if $u^y_t=0$, then $y_{t}^{\ge r_y}$ is unaffected by its parents in the T-FTSCG. When $\epsilon^y_t=1$, an anomaly will deterministically result in another anomaly. We avoid $\epsilon^y_t=0$ to maintain the causal relation between $y^{\ge r_y}$ and its parents.
	Similarly, the background variable $i^y_t\sim \mathcal{B}(\beta^y_t)$ represents a hidden cause. If $i^y_t=1$, then $y_{t}^{\ge r_y}=1$ independently of its parents in the T-FTSCG. We avoid $\beta^y_t=0$ (or $\beta^y_t=1$) to prevent $y^{\ge r_y}=1$ for all $t$ (or $0$).
	
	The T-DSCM relies on the following standard assumption, implying the absence of  \emph{unobserved} confounding biases in the system.

	\begin{assumption} 
		\label{assumption:no_hidden_con}
		Let $\mathcal{M}=\langle(\mathbb{U}_t, \mathbb{I}_t), \mathbb{T}^r, f, (P(u_t), P(i_t))\rangle$ be a T-DSCM.
		We assume that all variables $u_t^y$ and $i_t^y$ of the background time series $\mathbb{U}_t$ and $\mathbb{I}_t$ in $\mathcal{M}$ are jointly independent.
	\end{assumption} 
	
	Now we can define anomalies and root causes when considering a T-DSCM associated to a T-FTCG.
	
	\begin{definition}[Anomaly and root cause]
		\label{def:anomaly_TDSCM_version}
		Let $\mathcal{G}_{\text{ft}} = (\mathbb{T}^r, \mathbb{E}_{\mathbb{T}}^r)$ a T-FTCG and $\mathcal{M}= \langle(\mathbb{U}_t, \mathbb{I}_t), \mathbb{T}^r, f, (P(u_t), P(i_t))\rangle$ its T-DSCM, and a sample satisfying $\mathcal{M}$.
		The random variable $x_t^{\ge r_x} \in \mathbb{T}^r$ is said to be \emph{anomalous} if $x^{\ge r_x}_t=1$ and a \emph{root cause} if $i^x_t=1$.
	\end{definition} 
	
	Graphically, we represent anomalies by orange nodes and root causes by thick borders, as depicted in  Figure~\ref{fig:diamond}.
	By definition of a T-DSCM, each variable may be a root cause ($\epsilon^y_t\in (0,1)$), 
	all root causes are anomalies (if $i^y_t=1$ then $y^{\ge r_y}_t=1$), and
	each anomaly is either propagated from a root cause through a chain of anomalies or is itself a root cause: if $y^{\ge r_y}_t=1$ either $i^y_t=1$ or $\exists x_{t-\gamma}^{\ge r_x}\in Pa_{\mathcal{G}_{ft}}(y^{\ge r_y}_t)$ such that  $x_{t-\gamma}^{\ge r_x} =1$ and $u_t^y=1$. 
	
	Our main goal is to directly identify root causes from time series data using predefined thresholds. We aim to utilize causal discovery algorithms, leveraging historical data to infer a causal graph, but the T-FTCG is infinite. Under the following condition, it becomes finite, sometimes referred to as a window causal graph \citep{Assaad_2022}.
	
	\begin{assumption}
		\label{assumption:consistency}
		Let $\mathcal{G}_{\text{ft}}$ be a T-FTCG. All the causal relationships remain constant in direction throughout time in $\mathcal{G}_{\text{ft}}$. 
	\end{assumption}
	
	This assumption implies that edges in the T-FTCG remain consistent over time (consistency throughout time). We denote $\gamma_{\max}$ as the maximal temporal lag between a cause and its effect within the system.
	Understanding and validating the finite T-FTCG can be challenging for system experts in many applications, and errors may arise for finite sample size and large $\gamma_{\max}$,  in estimation or when determining the exact lag between a cause and its effect. We introduce  an abstraction of the T-FTCG illustrated in Figure~\ref{fig:summary-graph}. 
	
	\begin{definition}[Threshold-based Summary Causal Graph, T-SCG]
		\label{def:t-SCG}
		Let $\mathbb{V}$ be the set of name-value of $d$ different time series in a system, and  $\mathcal{G}_{\text{ft}}=(\mathbb{T}^r,\mathbb{E}_{\mathbb{T}}^r)$ the corresponding T-FTCG. The \emph{threshold-based summary causal graph} $\mathcal{G} = (\mathbb{V}^r, \mathbb{E}^r)$ associated to $\mathcal{G}_{\text{ft}}$ is given by $\mathbb{V}^r := \{X^{\ge r_x} \mid \forall X\in \mathbb{V} \text{ and }  r_x\in \mathbb{R}^{d}\}$ and $\mathbb{E}^r$ such that $X^{\ge r_x} \rightarrow Y^{\ge r_y}$ is in $\mathbb{E}^r$ if and only if there exists $\gamma\in \{1, \cdots, \gamma_{\max}\}$ such that $x_{t-\gamma}^{\ge r_x} \rightarrow y_t^{\ge r_y}$ in $\mathbb{E}_{\mathbb{T}}^r$.
	\end{definition}
	

	Finally, we introduce the appearance time of anomalies, illustrated in Figure \ref{fig:appearance}.

	\begin{definition}[Appearance time of anomalies]
		\label{def:appearance_time}
		Given a T-SCG $\mathcal{G}=(\mathbb{V}^r, \mathbb{E}^r)$ and observations of the time series, the appearance time of anomalies is the mapping 
		$  \tau: 
		X^{\ge r_x} \mapsto \underset{t}{\operatorname{argmin}} \{ x_t^{\ge r_x}=1\}.$
	\end{definition}
	
	%

	\section{Threshold-based root cause analysis}
	\label{sec:TRCA}
	
	In IT monitoring systems, our primary focus is typically on identifying the root causes of existing anomalies, specifically those presently occurring. Thus, we differentiate between historical data, denoted as offline data $\mathcal{D}_\text{off}$, and current data, referred to as online data $\mathcal{D}_\text{on}$. Anomalies and root causes will be sought within the online data $\mathcal{D}_\text{on}$.
	
	We propose T-RCA (Threshold-based Root Cause Analysis) for detecting root causes. This method involves causal discovery to uncover the T-SCG from offline data $\mathcal{D}_{\text{off}}$. When anomalies occur in online data $\mathcal{D}_{\text{on}}$, a graph traversal strategy is applied to the inferred T-SCG. An overview is provided in Figure~\ref{fig:architecture}. Section~\ref{ssec:cd} outlines causal discovery, while Section~\ref{ssec:TRCA} details graph traversal and presents the result that, under an additional assumption, our method identifies root causes precisely. Section~\ref{ssec:TRCAextentions} introduces an extension applicable when this assumption is violated.
	
	\subsection{Causal discovery of T-SCG}
	\label{ssec:cd}

	According to \cite{Pearl_1988}, under certain assumptions and in the absence of instantaneous relations, it's possible to discover a directed and acyclic causal graph from observational data. We restate this result precisely within the framework of T-FTCG. 
	
	\begin{lemma}
		\label{lemma:identifiability_of_TFTCG}
		Let $\mathcal{M}$ be a T-DSCM associated to a T-FTCG  $\mathcal{G}_{\text{ft}}$. 
		If Assumptions~\ref{assumption:cmc},~\ref{assumption:adjacency_faithfulness},~\ref{assumption:no_hidden_con},~\ref{assumption:consistency} are satisfied then
		$\mathcal{G}_{\text{ft}}$ is identifiable from the distribution induced by $\mathcal{M}$.
	\end{lemma}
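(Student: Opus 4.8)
The plan is to recover $\mathcal{G}_{\text{ft}}$ in the two classical phases of constraint-based causal discovery — first its skeleton (the set of adjacencies), then the orientation of every edge — and to argue that the temporal structure of a T-FTCG makes both phases succeed uniquely from the observational distribution induced by $\mathcal{M}$.

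First I would establish skeleton recovery by proving that two vertices $x_{t-\gamma}^{\ge r_x}$ and $y_t^{\ge r_y}$ are adjacent in $\mathcal{G}_{\text{ft}}$ if and only if they remain statistically dependent conditionally on every subset of the remaining vertices. The ``only if'' direction is precisely Assumption~\ref{assumption:adjacency_faithfulness}. For the ``if'' direction I would argue the contrapositive: if the two vertices are non-adjacent, take the one later in time, say $y_t^{\ge r_y}$; since all edges advance time, the earlier vertex $x_{t-\gamma}^{\ge r_x}$ is a non-descendant of $y_t^{\ge r_y}$, so by the Causal Markov condition (Assumption~\ref{assumption:cmc}) it is independent of $y_t^{\ge r_y}$ given $\mathrm{Pa}_{\mathcal{G}_{\text{ft}}}(y_t^{\ge r_y})$, exhibiting an explicit separating set. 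Here Assumption~\ref{assumption:no_hidden_con} plays the essential role of guaranteeing causal sufficiency: because the exogenous terms $u_t^y$ and $i_t^y$ are jointly independent, no latent common cause can induce a dependence among observed vertices beyond what the edges encode, so the conditional independencies of the observed distribution coincide with the d-separations of $\mathcal{G}_{\text{ft}}$. Combining the two directions shows that the adjacency relation, and hence the skeleton, is a deterministic function of the distribution.

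Next I would orient the edges, which is the step where the temporal structure does all the work. By Definition~\ref{def:t-FTCG} a T-FTCG has no instantaneous edges: every edge joins vertices at two distinct times, linking $x_{t-\gamma}^{\ge r_x}$ with $\gamma>0$ to $y_t^{\ge r_y}$. Since a cause must precede its effect, each recovered adjacency can only be oriented from the earlier time index to the later one, so the orientation is forced and unique with no residual Markov-equivalence ambiguity. This is exactly the point at which the argument departs from the general observational setting, in which only a CPDAG would be identifiable.

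Finally, the infinite graph is handled through Assumption~\ref{assumption:consistency}: because all causal relations are constant in direction throughout time and bounded by the maximal lag $\gamma_{\max}$, the infinite T-FTCG is fully determined by the repeating finite pattern of edges inside one window of length $\gamma_{\max}$, and that pattern is itself fixed by the skeleton-plus-orientation argument above. I expect the main obstacle to be the ``if'' direction of skeleton recovery — specifically, checking that the separating set furnished by the Markov condition is indeed a set of observed vertices and that causal sufficiency transfers cleanly to the binary thresholded series despite the exogenous noise $u_t^y$ and $i_t^y$; once sufficiency is secured, the temporal ordering renders the usually delicate orientation phase essentially automatic.
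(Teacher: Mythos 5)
Your proposal is correct and takes essentially the same route the paper relies on: the paper gives no explicit proof of this lemma, instead invoking the classical result of Pearl (1988) that the causal Markov condition, faithfulness, causal sufficiency, and the absence of instantaneous relations (so every adjacency is oriented by temporal precedence) identify the DAG --- which is exactly the skeleton-recovery-plus-temporal-orientation argument you spell out. One cosmetic caveat: your passing remark that the conditional independencies ``coincide with the d-separations'' of $\mathcal{G}_{\text{ft}}$ would require full faithfulness rather than the adjacency faithfulness of Assumption~\ref{assumption:adjacency_faithfulness}, but your actual steps use only adjacency faithfulness plus the Markov condition, so nothing load-bearing is affected.
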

	
	
	
	In practice, any causal discovery algorithm capable of handling binary time series and aligned with our assumptions can be employed. 
	Once we get the inferred T-FTCG $\mathcal{\hat G}_{\text{ft}}$, we can easily deduce the T-SCG $\mathcal{\hat G}$ using Definition~\ref{def:t-SCG}.
	It is also possible to directly discover the T-SCG (assuming no instantaneous relations) \citep{Assaad_2022PCGCE}\footnote{The idea proposed in \cite{Assaad_2022PCGCE} consider a richer type of graph called extended summary causal graph, but when there are no instantaneous relations, the summary causal graph gives the same information as the extended summary causal graph.}.
	
	\subsection{Root cause detection via subgraph traversal}
	\label{ssec:TRCA}
	
	Assuming we have a T-SCG $\mathcal{G}=(\mathbb{V}^r, \mathbb{E}^r)$ and online data $\mathcal{D}_{\text{on}}$ containing observed anomalies, our objective is root cause detection. We can construct a colored graph, where vertices are colored if anomalies are present in $\mathcal{D}_{\text{on}}$. These colored vertices are denoted as $\mathbb{A}$, and $\mathcal{G}_{\mathbb{A}}$ represents the anomalous subgraph containing only these colored vertices. We first proceed by decomposing this graph into strongly connected components.
	
	\begin{definition}[Strongly connected component,  (SCC)]
		\label{def:SCC}
		Let $\mathcal{G}=(\mathbb{V}^r, \mathbb{E}^r)$ be a T-SCG. A subset $\mathbb{S} \subseteq \mathbb{V}^r$ is a \emph{strongly connected component} of $\mathcal{G}$ iff $\mathbb{S}$ is a maximal set of vertices where every vertex is reachable via a directed path in $\mathcal{G}$  from every other vertex in $\mathbb{S}$.
	\end{definition}
	\noindent 
	For instance, in Figure~\ref{fig:summary-graph}, there are three SCCs: $\{Z^{\ge r_z}\}$, $\{Y^{\ge r_y}\}$, and $\{X^{\ge r_y}, W^{\ge r_w}\}$.
	%
	For each SCC of size $1$, we use the following lemma to test if the vertex in the SCC is a root cause.
	
	
	\begin{lemma}
		\label{lemma:root_causes_forwards}
		{Let $\mathcal{G}_{\mathbb{A}}$ be an anomalous subgraph of a T-SCG $\mathcal{G}=(\mathbb{V}^r, \mathbb{E}^r)$ in $\mathcal{D}_{on}$}.
		If an anomalous vertex in $\mathcal{G}_{\mathbb{A}}$ does not have any anomalous parent in $\mathcal{
			G}_{\mathbb{A}}$ then it is a root cause.
	\end{lemma}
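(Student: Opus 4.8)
The plan is to reduce this summary-level statement to a single evaluation of the T-DSCM structural mapping at the appearance time of the anomalous vertex, and then simply read off that its intervention noise must have fired.

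First I would fix the anomalous vertex in question, say $Y^{\ge r_y}\in\mathbb{A}$, and set $t^\star=\tau(Y^{\ge r_y})$, its appearance time, so that by Definition~\ref{def:appearance_time} we have $y_{t^\star}^{\ge r_y}=1$. The target is to show $i^y_{t^\star}=1$, which by Definition~\ref{def:anomaly_TDSCM_version} is exactly what it means for $Y^{\ge r_y}$ to be a root cause. Next I would translate the summary-graph hypothesis down to the full-time graph. By Definition~\ref{def:t-SCG}, a name-value $X^{\ge r_x}$ is a parent of $Y^{\ge r_y}$ in $\mathcal{G}$ if and only if $x_{t-\gamma}^{\ge r_x}\to y_t^{\ge r_y}$ in $\mathcal{G}_{\text{ft}}$ for some $\gamma\in\{1,\dots,\gamma_{\max}\}$; hence every full-time parent $x_{t^\star-\gamma}^{\ge r_x}\in Pa_{\mathcal{G}_{\text{ft}}}(y_{t^\star}^{\ge r_y})$ carries a name-value that is a T-SCG parent of $Y^{\ge r_y}$. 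The hypothesis that $Y^{\ge r_y}$ has no anomalous parent in $\mathcal{G}_{\mathbb{A}}$ means precisely that none of these T-SCG parents lies in $\mathbb{A}$, i.e.\ each such $X^{\ge r_x}$ is non-anomalous and therefore satisfies $x_s^{\ge r_x}=0$ at every observed time $s$, in particular at every $s=t^\star-\gamma$.

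Then I would evaluate the structural mapping $f$ of Definition~\ref{def:tdscm} at time $t^\star$. Since every full-time parent of $y_{t^\star}^{\ge r_y}$ vanishes, the disjunction $\bigvee_{x_{t^\star-\gamma}^{\ge r_x}\in Pa_{\mathcal{G}_{\text{ft}}}(y_{t^\star}^{\ge r_y})} x_{t^\star-\gamma}^{\ge r_x}$ equals $0$, so the equation collapses to $y_{t^\star}^{\ge r_y}=(0\wedge u^y_{t^\star})\vee i^y_{t^\star}=i^y_{t^\star}$. Combined with $y_{t^\star}^{\ge r_y}=1$, this forces $i^y_{t^\star}=1$, establishing that $Y^{\ge r_y}$ is a root cause.

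The substance of the argument, and the only place that demands care, is the second step: the lemma is phrased over the summary graph $\mathcal{G}_{\mathbb{A}}$, whereas $f$ lives on the full-time graph $\mathcal{G}_{\text{ft}}$, so I must be certain that ``no anomalous parent at the summary level'' genuinely rules out every full-time parent being active at the appearance time. Once this bridge between the two levels is in place, the remainder is only Boolean simplification of $f$ and I expect no real difficulty there. I note finally that passing through the appearance time $\tau$ is merely a convenience that pins down a concrete anomalous instant at which to evaluate $f$; since the offending parents are identically $0$ throughout, any anomalous time of $Y^{\ge r_y}$ would serve equally well.
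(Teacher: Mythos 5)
Your proof is correct and takes essentially the same route as the paper's: the paper's one-line argument states that, by Definition~\ref{def:tdscm}, an anomaly on a vertex with no anomalous parent cannot have been propagated from other vertices, hence $i^{y}_t=1$ and the vertex is a root cause. Your write-up simply makes explicit the two steps the paper leaves implicit — the bridge from summary-level parents to full-time-graph parents, and the Boolean evaluation of $f$ at the appearance time — so it is a more detailed rendering of the same argument rather than a different one.
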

	
	\begin{proof}
		By Definition~\ref{def:tdscm}, the anomaly on a vertex $Y^{\ge r_y}$ that does not have any anomalous parent cannot be propagated from other vertices. Thus $i^{y}_t=1$ which implies that $Y^{\ge r_y}$ is a root cause.
	\end{proof}
	
	This lemma offers a direct method to identify certain root causes. For instance, in Figure \ref{fig:architecture}, it detects $Z^{\ge{r_z}}$ as a root cause. 
	
	Furthermore, using the notion of appearance time of anomalies, for each SCC of size greater than $1$, we use the following lemma to test if one of the vertices in the SCC is a root cause. 
	
	
	\begin{lemma}
		\label{lemma:root_causes_SCC} {Let $\mathcal{G}_{\mathbb{A}}$ be an anomalous subgraph of a T-SCG $\mathcal{G}=(\mathbb{V}^r, \mathbb{E}^r)$ in $\mathcal{D}_{on}$}, $\tau$ be the appearance time of anomalies, and $\mathbb{S}$ an SCC in $\mathcal{G}_{\mathbb{A}}$ of $size(\mathbb{S})>1$ such that  {$Pa_{\mathcal{G}_{\mathbb{A}}}(\mathbb{S})\subseteq \mathbb{S}$}. The vertex
		$ \underset{X^{\ge r_x}\in \mathbb{S}}{\operatorname{argmin}}  \{\tau(X^{\ge r_x})\}$
		is a root cause. 
	\end{lemma}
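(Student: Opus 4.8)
The plan is to argue by contradiction, showing that the earliest-appearing vertex of the component cannot have acquired its anomaly by propagation and must therefore carry an active hidden cause. Let $X^{\ge r_x}\in\mathbb{S}$ attain the minimum appearance time and set $t^\ast:=\tau(X^{\ge r_x})=\min_{V\in\mathbb{S}}\tau(V)$, so that $x_{t^\ast}^{\ge r_x}=1$ while $X^{\ge r_x}$ is not anomalous at any earlier instant. I would then suppose, toward a contradiction, that $X^{\ge r_x}$ is not a root cause at $t^\ast$, that is $i^x_{t^\ast}=0$.

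The first substantive step is to unfold the structural mapping of Definition~\ref{def:tdscm}. Since $x_{t^\ast}^{\ge r_x}=1$ yet $i^x_{t^\ast}=0$, the outer disjunction forces the first clause to hold, so $u^x_{t^\ast}=1$ and there is a full-time parent $w_{t^\ast-\gamma}^{\ge r_w}\in Pa_{\mathcal{G}_{\text{ft}}}(x_{t^\ast}^{\ge r_x})$ with $w_{t^\ast-\gamma}^{\ge r_w}=1$ for some lag $\gamma\ge 1$; here the absence of instantaneous relations guarantees the strict inequality $\gamma>0$, which is exactly what makes the appearance-time argument bite. This exhibits an anomaly of the time series $W^{\ge r_w}$ occurring strictly before $t^\ast$, whence $\tau(W^{\ge r_w})\le t^\ast-\gamma<t^\ast$.

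The key step, and the one I expect to be the main (though purely bookkeeping) obstacle, is to relocate this parent inside $\mathbb{S}$ by passing cleanly from the full-time graph to the summary graph. Since $w_{t^\ast-\gamma}^{\ge r_w}=1$, the vertex $W^{\ge r_w}$ is anomalous and hence belongs to $\mathcal{G}_{\mathbb{A}}$; moreover the edge $w_{t^\ast-\gamma}^{\ge r_w}\to x_{t^\ast}^{\ge r_x}$ yields $W^{\ge r_w}\to X^{\ge r_x}$ in the T-SCG by Definition~\ref{def:t-SCG}, so on the induced anomalous subgraph $W^{\ge r_w}\in Pa_{\mathcal{G}_{\mathbb{A}}}(X^{\ge r_x})\subseteq Pa_{\mathcal{G}_{\mathbb{A}}}(\mathbb{S})$. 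Invoking the hypothesis $Pa_{\mathcal{G}_{\mathbb{A}}}(\mathbb{S})\subseteq\mathbb{S}$ then places $W^{\ge r_w}\in\mathbb{S}$, and combined with $\tau(W^{\ge r_w})<t^\ast=\min_{V\in\mathbb{S}}\tau(V)$ this contradicts the minimality of $\tau(X^{\ge r_x})$ over $\mathbb{S}$. I therefore conclude $i^x_{t^\ast}=1$, so $X^{\ge r_x}$ is a root cause, which completes the proof.
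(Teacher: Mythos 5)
Your proof is correct and takes essentially the same approach as the paper's: the paper simply asserts that the anomaly on the earliest vertex of $\mathbb{S}$ ``cannot be propagated from other vertices'' and concludes $i^{y}_t=1$, which is precisely your contradiction argument with the supporting details (strictly positive lag, passage from the T-FTCG parent to the corresponding T-SCG parent, and the hypothesis $Pa_{\mathcal{G}_{\mathbb{A}}}(\mathbb{S})\subseteq\mathbb{S}$ combined with minimality of $\tau$) made explicit. Your write-up fills in the bookkeeping that the paper's two-line proof leaves implicit, including the self-cause case, which is handled automatically since a strictly earlier anomaly of $X^{\ge r_x}$ itself would already contradict the definition of $\tau(X^{\ge r_x})$.
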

	
	\begin{proof}
		Let $\mathbb{S}$ be an SCC in an anomalous subgraph $\mathcal{G}_{\mathbb{A}}$ such that $size(\mathbb{S})>1$ and $Pa_{\mathcal{G}_{\mathbb{A}}}(\mathbb{S})\subseteq \mathbb{S}$. By  Definition~\ref{def:tdscm}, the anomaly on $Y^{\ge r_y}$ satisfying 
		$\underset{X^{\ge r_x}\in \mathbb{S}}{\operatorname{argmin}}  \{\tau(X^{\ge r_x})\}$
		cannot be propagated from other vertices, i.e., 
		$\left(\left(\bigvee_{x_{t-\gamma}^{\ge r_x}\in Pa_{\mathcal{G}_{\text{ft}}}(y_t^{\ge r_y})} x_{t-\gamma}^{\ge r_x}\right) \wedge u^{y}_t\right)=0.$ Thus $i^{y}_t=1$ which implies that $Y^{\ge r_y}$ is a root cause.
	\end{proof}
	
	In the example in Figure \ref{fig:diamond},  we can deduce from this lemma that $X^{\ge{r_x}}$ is a root cause since $\tau(X^{\ge{r_x}})<\min(\tau(Y^{\ge{r_y}}),\tau(W^{\ge{r_w}}))$.
	


	It is noteworthy that the aforementioned lemmas generally do not guarantee the detection of all root causes. Specifically, these lemmas are incapable of detecting a root cause that is influenced by another root cause, i.e., $i_t^y=1$ and simultaneously $\exists x_{t-\gamma}^{\ge r_x}\in Pa_{\mathcal{G}_{\text{ft}}}(y_t^{\ge r_y})$ such that $ x_{t-\gamma}^{\ge r_y} \wedge u^{y}_t=1.$
	This means that in Figure~\ref{fig:window-graph}, if $z^{\ge r_z}_{t-2}$ is a root cause then $y^{\ge r_y}_{t-1}$ cannot be a root cause.
	We argue that these undetectable root causes are rare in practice, therefore we introduce the following assumption to mitigate their impact.

	\begin{assumption}
		\label{assumption:one_intervention}
		Let $\mathcal{G}=(\mathbb{V}^r, \mathbb{E}^r)$ be a T-SCG. 
		We assume that if $X^{\ge r_x}$ is a root cause in $\mathcal{D}_{on}$, i.e., $\exists t\in \mathcal{D}_{on}$ such that $i^{x}_{t}=1$,  then there exists no $Z^{\ge r_z}\in An_{\mathcal{G}}(X^{\ge r_x})$, such that 
		\begin{itemize}
			\item $Z^{\ge r_z}$ is a root cause in $\mathcal{D}_{on}$; and
			\item  for all $Y^{\ge r_y}\in Desc_{\mathcal{G}}(Z^{\ge r_z})\cap An_{\mathcal{G}}(X^{\ge r_x})\backslash\{X^{\ge r_x},Z^{\ge r_z}\}$, $Y^{\ge r_y}$ is anomalous in $\mathcal{D}_{on}$.
		\end{itemize}
	\end{assumption}
	
	Generally, a T-FTCG offers a more detailed system representation than a T-SCG. However, due to the absence of instantaneous relations and the timing of anomalies, along with Assumption~\ref{assumption:one_intervention}, the T-FTCG doesn't provide an advantage in root cause detection over the T-SCG. The following theorem verifies the validity of T-RCA\footnote{The pseudo-code of T-RCA is available in Appendix.} under Assumption~\ref{assumption:one_intervention}.
	%
	\begin{theorem}
		\label{theorem:TRCA}
		Under Assumptions~\ref{assumption:cmc},~\ref{assumption:adjacency_faithfulness},~\ref{assumption:no_hidden_con},~\ref{assumption:consistency},~\ref{assumption:one_intervention}, T-RCA detects exactly the set of true root causes: if $\mathbb{C}$ is the set of true root causes and $\mathbb{\hat C}$ is the set of root causes inferred by T-RCA, $\mathbb{C}=\mathbb{\hat C}$.
	\end{theorem}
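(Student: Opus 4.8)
The plan is to prove the set equality by the two inclusions $\hat{\mathbb{C}}\subseteq\mathbb{C}$ (soundness) and $\mathbb{C}\subseteq\hat{\mathbb{C}}$ (completeness). Before either, I would invoke Lemma~\ref{lemma:identifiability_of_TFTCG}: under Assumptions~\ref{assumption:cmc}--\ref{assumption:consistency} the T-FTCG, and hence the T-SCG $\mathcal{G}$ obtained through Definition~\ref{def:t-SCG}, is correctly recovered from $\mathcal{D}_{\text{off}}$, so the anomalous subgraph $\mathcal{G}_{\mathbb{A}}$ built from $\mathcal{D}_{\text{on}}$ is exactly the true one. This lets me reason on the true graph and reduces the theorem to the correctness of the traversal step.

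Soundness is essentially inherited from the two detection lemmas. Every vertex placed in $\hat{\mathbb{C}}$ is flagged either because it is an anomalous vertex with no anomalous parent in $\mathcal{G}_{\mathbb{A}}$ (a size-$1$ SCC), in which case Lemma~\ref{lemma:root_causes_forwards} certifies it is a root cause, or because it is the $\operatorname{argmin}_{\tau}$ vertex of a source SCC of size $>1$, i.e. one with $Pa_{\mathcal{G}_{\mathbb{A}}}(\mathbb{S})\subseteq\mathbb{S}$, in which case Lemma~\ref{lemma:root_causes_SCC} certifies it. Hence $\hat{\mathbb{C}}\subseteq\mathbb{C}$, and this direction uses only Assumptions~\ref{assumption:cmc}--\ref{assumption:consistency}.

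The substantive direction is completeness, which I would prove by contraposition: if a true root cause $X^{\ge r_x}$ is missed, then Assumption~\ref{assumption:one_intervention} is violated. The engine of the argument is a propagation fact read off Definition~\ref{def:tdscm}: any anomalous vertex that is not a root cause has $i=0$, so its value can only come from the disjunction over its parents, forcing at least one anomalous and non-immune parent; moreover, since the T-FTCG carries no instantaneous relations (Definition~\ref{def:t-FTCG}) and is consistent over time (Assumption~\ref{assumption:consistency}), each such propagation step strictly increases the appearance time $\tau$. Tracing parents backward from any anomalous vertex therefore terminates at a root cause and yields a fully anomalous directed path with strictly increasing $\tau$. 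I would then split the missed-vertex analysis along the SCC condensation of $\mathcal{G}_{\mathbb{A}}$: a missed root cause $X^{\ge r_x}$ either sits in a size-$1$ SCC yet has an anomalous parent, lies in a non-source SCC, or lies in a source SCC of size $>1$ without being its $\tau$-minimal vertex. In each case I would exhibit a second root cause $Z^{\ge r_z}\in An_{\mathcal{G}}(X^{\ge r_x})$ with $Z^{\ge r_z}\neq X^{\ge r_x}$ joined to $X^{\ge r_x}$ by a fully anomalous path (by tracing back from the anomalous parent, or, inside a source SCC of size $>1$, by taking the $\tau$-minimal vertex, which Lemma~\ref{lemma:root_causes_SCC} already guarantees is a root cause, as the ancestor). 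Such a $Z^{\ge r_z}$ contradicts Assumption~\ref{assumption:one_intervention}, so no root cause is missed and $\mathbb{C}\subseteq\hat{\mathbb{C}}$.

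The main obstacle is matching the constructed configuration to the exact universally quantified form of Assumption~\ref{assumption:one_intervention}: the assumption forbids a root-cause ancestor $Z^{\ge r_z}$ for which \emph{every} $Y^{\ge r_y}\in Desc_{\mathcal{G}}(Z^{\ge r_z})\cap An_{\mathcal{G}}(X^{\ge r_x})\setminus\{X^{\ge r_x},Z^{\ge r_z}\}$ is anomalous, whereas the backward trace only supplies a single fully anomalous path. The delicate step is therefore to select $Z^{\ge r_z}$ as the \emph{nearest} root-cause ancestor along the propagation corridor, so that the intermediate set collapses to vertices the propagation fact forces to be anomalous; the case where the SCC of $\mathcal{G}_{\mathbb{A}}$ is a strict subset of the corresponding SCC of $\mathcal{G}$ (so a vertex of the $\mathcal{G}$-corridor may a priori be non-anomalous), together with ties in $\tau$, is exactly where the argument must be made airtight. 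A secondary point to check is uniqueness within a source SCC of size $>1$: Assumption~\ref{assumption:one_intervention} must be shown to force at most one root cause there, so that the vertex returned by Lemma~\ref{lemma:root_causes_SCC} coincides with \emph{the} true root cause rather than merely being \emph{some} root cause.
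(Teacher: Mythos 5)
Your proposal is correct and follows essentially the same route as the paper's proof: soundness is read off Lemmas~\ref{lemma:root_causes_forwards} and~\ref{lemma:root_causes_SCC}, and completeness is obtained by supposing a true root cause is missed and deriving a violation of Assumption~\ref{assumption:one_intervention} through a case split on the SCC structure of the anomalous subgraph. Notably, the two delicate points you flag are both passed over silently in the paper's own proof---it claims a missed root cause in a source SCC of size greater than $1$ ``would have been detected by Lemma~\ref{lemma:root_causes_SCC}'' although that lemma only certifies the $\tau$-minimal vertex, and it treats a single anomalous propagation path from a root-cause ancestor as a violation of Assumption~\ref{assumption:one_intervention} without checking the universally quantified condition on all of $Desc_{\mathcal{G}}(Z^{\ge r_z})\cap An_{\mathcal{G}}(X^{\ge r_x})\backslash\{X^{\ge r_x},Z^{\ge r_z}\}$---so your sketch is, if anything, more explicit than the published argument.
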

	%
	\begin{proof}
		The soundness' proof (an inferred root cause by T-RCA is  a true root cause)  is directly given  by Lemmas~ \ref{lemma:root_causes_forwards} and \ref{lemma:root_causes_SCC}.
		
		Let prove  that under~Assumption~\ref{assumption:one_intervention}, a true root cause is necessarily inferred by T-RCA.
		Suppose that there exists $C\in \mathbb{C}$ such that $C$ is not detectable by T-RCA. 
		$C$ is not a root vertex in the anomalous subgraph of the T-SCG, otherwise it would have been detected by Lemma~\ref{lemma:root_causes_forwards}. $C$ does not belongs to an SCC $\mathbb{S}$ where $\mathbb{S}$ has a size greater than $1$ and $Pa_{\mathcal{G}_{\mathbb{A}}}(\mathbb{S})\subseteq \mathbb{S}$, otherwise it would have been detected by Lemma~\ref{lemma:root_causes_SCC}. 
		Now, if $C$ is not a root vertex and  belongs to an SCC $\mathbb{S}$ of size $1$ it violates Assumption~\ref{assumption:one_intervention}, since in this case, $C$ must have  a parent that is anomalous and is either a root cause or is propagated from a root cause that is an ancestor of $C$.
		Similarly, if $C$ is not a root vertex and  belongs to an SCC $\mathbb{S}$ of size greater than $1$ and $Pa_{\mathcal{G}_{\mathbb A}}(\mathbb{S})\not\subset \mathbb{S}$, it also violates Assumption~\ref{assumption:one_intervention}. Indeed, at least one member of $\mathbb{S}$ has an anomalous parent that is not in $\mathbb{S}$ and that is either a root cause or is propagated from a root cause that is an ancestor of $C$; all other members have an anomalous parent in $\mathbb{S}$ that is propagated from a root cause that is an ancestor of $C$.
	\end{proof}

	
	\subsection{Agent-based extension} 
\label{ssec:TRCAextentions}


When Assumption~\ref{assumption:one_intervention} is violated, T-RCA struggles to distinguish root causes among anomalies. To address this, we propose T-RCA-agent, an agent-based extension. T-RCA-agent first runs T-RCA to identify the initial batch of root causes  associated with anomalies. An agent then rectifies incidents attributed to these root causes. If anomalies persist, T-RCA-agent reruns T-RCA on the remaining anomalous time series iteratively until no anomalies remain. Unlike T-RCA, T-RCA-agent guarantees root cause detection even when Assumption~\ref{assumption:one_intervention} is violated. Additionally, we theoretically determine the number of iterations needed for termination. 
%
\begin{proposition}
\label{proposition:optimal_agent}
Let $m$ be the maximum number of root causes on the same directed path in the T-SCG and not satisfying Assumption~\ref{assumption:one_intervention}.
Under Assumptions~\ref{assumption:cmc}, \ref{assumption:adjacency_faithfulness}, \ref{assumption:no_hidden_con}, \ref{assumption:consistency}, T-RCA-agent identifies all root causes after $m$ iterations.
\end{proposition}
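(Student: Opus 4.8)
The plan is to prove soundness and completeness separately, and to bound the number of iterations by induction on a \emph{masking depth} assigned to each true root cause. For a root cause $C$, call a sequence of root causes $C_1,\dots,C_\ell=C$ lying on a common directed path of $\mathcal{G}$ a \emph{masking chain ending at $C$} if between each consecutive pair $C_j,C_{j+1}$ every intermediate vertex is anomalous in $\mathcal{D}_{\text{on}}$, and define $\rho(C)$ as the maximal length $\ell$ of such a chain. With this convention $m=\max_{C}\rho(C)$, since a directed path carrying the maximal number of root causes in a violating configuration is exactly a longest masking chain; in particular a root cause has depth $1$ precisely when no ancestor root cause masks it, i.e.\ when it satisfies Assumption~\ref{assumption:one_intervention}. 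Soundness is inherited for free at every iteration: whenever T-RCA reports a vertex, that vertex is anomalous and either has no anomalous parent (Lemma~\ref{lemma:root_causes_forwards}) or is the earliest-appearing vertex of a source SCC (Lemma~\ref{lemma:root_causes_SCC}) \emph{in the residual anomalous subgraph}, so its anomaly cannot have been propagated and $i^{c}_t=1$; hence it is a genuine root cause of the original data.

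The core is a completeness-per-iteration claim proved by induction on $j$: at the start of iteration $j$ the residual anomalous subgraph is exactly the one obtained by deleting all root causes of depth $<j$ together with the anomalies propagated solely from them, and T-RCA then detects precisely the root causes of depth $j$. For the base case $j=1$, the depth-$1$ root causes are the unmasked ones, and the contrapositive of the argument in Theorem~\ref{theorem:TRCA} (an undetected root cause must possess a masking ancestor) shows they are caught by Lemmas~\ref{lemma:root_causes_forwards} and~\ref{lemma:root_causes_SCC} on the first pass. For the inductive step, assume depths $<j$ have been detected and rectified by the agent, so every anomaly propagated only from those root causes has disappeared while every true root cause of depth $\ge j$ is still anomalous (setting $i=0$ upstream does not alter $i$ at $C$). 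Take a root cause $C$ of depth $j$. I will show $C$ has no anomalous parent in the residual graph, or is the source of its SCC: any parent still anomalous would be propagated from a surviving root cause $R$ along a fully anomalous path, which would extend a chain through $R$ into $C$ and force $\rho(C)\ge\rho(R)+1\ge j+1$, contradicting $\rho(C)=j$. Hence $C$ becomes a top vertex of the residual subgraph and is detected in iteration $j$ by Lemma~\ref{lemma:root_causes_forwards}, or, if $C$ sits in a non-trivial strongly connected component that is now a source, by Lemma~\ref{lemma:root_causes_SCC}.

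Termination and global completeness then follow: since $\max_C\rho(C)=m$, all root causes have been reported by the end of iteration $m$; and because every anomaly in a T-DSCM is either a root cause or propagated through a chain of anomalies from one (Definition~\ref{def:anomaly_TDSCM_version}), rectifying all root causes removes all anomalies, so the stopping condition that no anomalies remain triggers after exactly $m$ iterations.

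I expect the inductive step to be the main obstacle, for two reasons. First, I must argue about \emph{all} anomalous parents of $C$ simultaneously and across possibly many overlapping directed paths, ruling out a surviving parent fed by another depth-$j$ (or deeper) root cause; this is where the definition of $\rho$ and its agreement with $m$ must be pinned down so the ``$+1$'' bookkeeping stays tight. Second, the SCC case needs care: I must check that after deletion the component containing $C$ either collapses so that $C$ has no anomalous parent, or remains a non-trivial source SCC with $Pa_{\mathcal{G}_{\mathbb{A}}}(\mathbb{S})\subseteq\mathbb{S}$, so Lemma~\ref{lemma:root_causes_SCC} applies to its earliest-appearing vertex. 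A subsidiary point is to make the agent's intervention semantics precise --- that ``rectifying'' a detected root cause amounts to setting its intervention noise to $0$ and recomputing propagation --- so that the residual data at each iteration still satisfies a T-DSCM and Lemmas~\ref{lemma:root_causes_forwards} and~\ref{lemma:root_causes_SCC} remain applicable.
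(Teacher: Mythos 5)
Your proposal takes essentially the same route as the paper's own proof: an induction over iterations in which each pass of T-RCA (Theorem~\ref{theorem:TRCA}, i.e.\ Lemmas~\ref{lemma:root_causes_forwards} and~\ref{lemma:root_causes_SCC} applied to the residual anomalous subgraph) detects the currently unmasked root causes, the agent's rectification strictly decreases the number of remaining root causes on every directed path, and the bound $m$ follows --- your masking depth $\rho$ is simply explicit bookkeeping for the paper's ``the number of root causes in each path decreases by at least 1.'' The one caveat, which you yourself flag, is that inside a nontrivial SCC two surviving root causes mask each other, so the inequality $\rho(C)\ge\rho(R)+1$ is unusable there (it would force $\rho(C)>\rho(C)$); at that point one must argue instead, as the paper does implicitly via its parenthetical remark, that Lemma~\ref{lemma:root_causes_SCC} peels off exactly one root cause of the SCC per iteration, namely the earliest-appearing one.
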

%
\begin{proof}
Using Theorem \ref{theorem:TRCA}, only the initial root cause of each directed path is identified. Subsequently, after agent intervention, the number of root causes in each path decreases by at least 1 (although multiple interventions may occur simultaneously, T-RCA selects one randomly), maintaining a maximum of $m-1$ root causes per path. Anomalous variables in $\mathcal{D}_{\text{on}}$ are updated accordingly, potentially assigning another root cause as the path's root. By induction, after employing T-RCA $m$ times, all root causes are detected.
\end{proof}




\section{Experiments}
\label{sec:experiments}

In this section, we first describe the experimental setup\footnote{Experiments on computational time  and robustness can be found in Appendix.}, then conduct a thorough analysis using simulated data generated from several models to evaluate our method, and finally we present an analysis using a real-world IT monitoring dataset\footnote{Python code of our method and experiments: \url{https://github.com/leizan/T-RCA}.}.

\subsection{Experimental setup}
\paragraph{T-RCA}
For causal discovery, we use the PCMCI algorithm~\citep{Runge_2019} in which the G-squared test, adapted for binary thresholding of time series, is employed to find conditional independencies.

\paragraph{Baselines}
We compare our methods with 8 other methods\footnote{We consider a version of EasyRCA and CIRCA where we learn the graph from normal offline data using PCMCI, denoted as EasyRCA$^*$~\citep{Assaad_2023} and CIRCA$^*$, respectively.}: RCD, CloudRanger, MicroCause, EasyRCA and EasyRCA$^*$, CIRCA and CIRCA$^{*}$ and AITIA-PM. For simulated data, results of EasyRCA and CIRCA are excluded as they require a causal graph input, which we assume unavailable\footnote{These results can be found in Appendix.}.
For RCD, CIRCA$^{*}$, and AITIA-PM, we select the top two variables from their output ranking as inferred root causes, considering that each case has two genuine root causes.
%
For T-RCA, EasyRCA$^{*}$, and MicroCause, we set the maximum lag $\gamma_{\max}$ to 1, and maintain a fixed significance level of 0.01 across all methods. For EasyRCA$^{*}$, CloudRanger, and MicroCause, a Fisher-z-test is utilized due to their use of raw (continuous) data as input.
Additionally, in the case of CloudRanger and MicroCause, the walk length is set to $1000$, and the backward step threshold is fixed to $0.1$. Default values are used for all other hyperparameters.

\paragraph{Evaluation}
Accuracy of root cause detection is measured by the F1-score between the  real causes $\mathbb{C}$ and the inferred set $\mathbb{\hat C}$:
$$F1\text{-score}(\mathbb{C}, \mathbb{\hat C})=\frac{2\sum_{c\in \mathbb{C}}\mathbb{1}_{c\in\mathbb{\hat C}}}{2\sum_{c\in \mathbb{C}}\mathbb{1}_{c\in\mathbb{\hat C}} + \sum_{\hat c\in \mathbb{\hat C}}\mathbb{1}_{\hat c\not\in\mathbb{C}} + \sum_{c\in \mathbb{C}}\mathbb{1}_{c\not\in\mathbb{\hat C}}}.$$

\subsection{Simulated data}
\label{subsec:sim_data}

\begin{figure*}[t]
\centering\includegraphics[width = 0.8\textwidth, clip, trim=1cm 1.37cm 0cm 1.45cm]{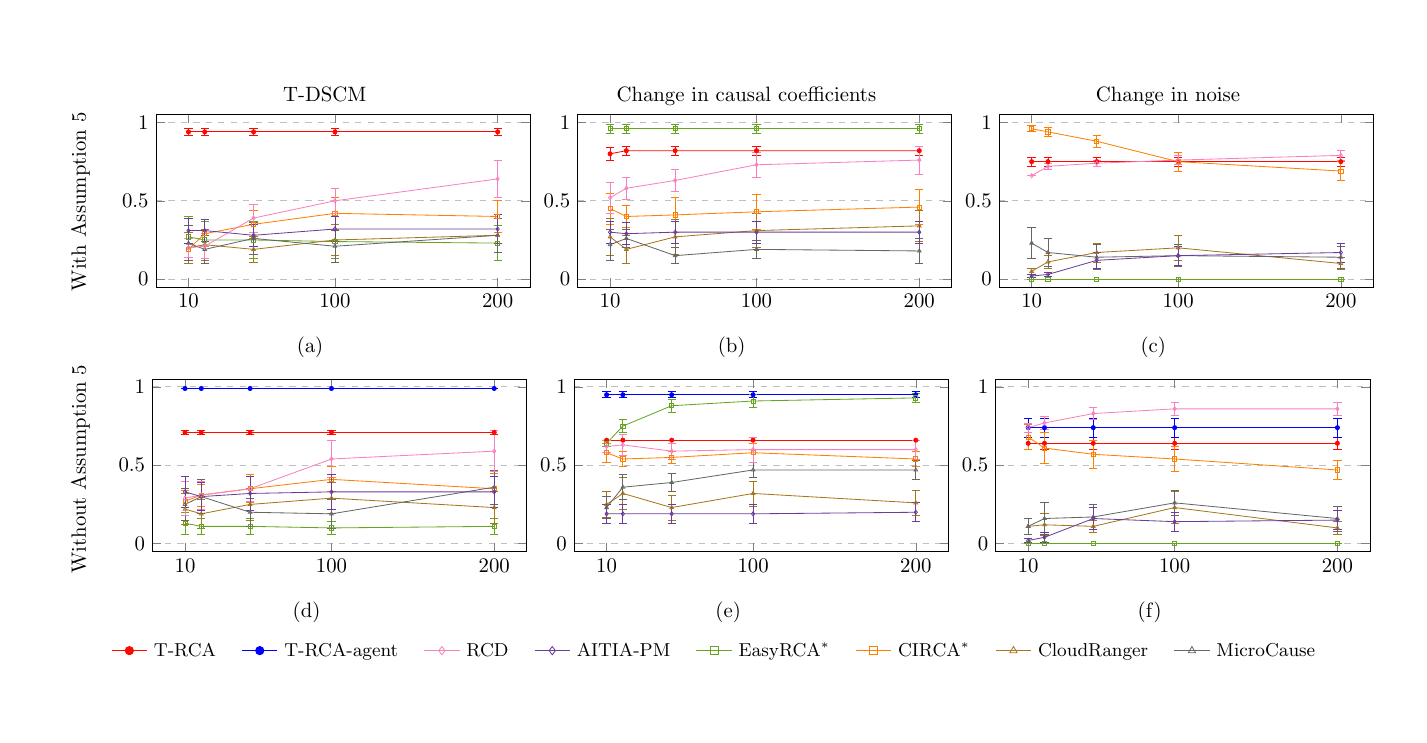}
\caption{
	Average F1-score and its variance across 50 simulations are depicted for simulated data. The length of $\mathcal{D}_{\text{on}}$ ranges from 10 to 200, generated from a T-DSCM (a and d), a DSCM with root causes experiencing changes in causal coefficients (b and e), and a DSCM with root causes undergoing changes in noise (c and f). Each model is assessed under two settings: one adhering to Assumption~\ref{assumption:one_intervention} (a, b, and c) and another that violates Assumption~\ref{assumption:one_intervention} (d, e, and f).
}
\label{fig:sim_data}
\end{figure*}

We explore three settings, each with distinct data generation processes. The first setting assesses method effectiveness by generating data from a T-DSCM, while the second and third settings evaluate method robustness using a different model. Within each setting, we examine two cases: one where Assumption~\ref{assumption:one_intervention} holds and another where it's violated. In the latter case, we introduce T-RCA-agent and set the number of iterations equal to the maximum number of genuine root causes in an active path. If T-RCA-agent identifies a genuine root cause, anomalous variables in $\mathcal{D}_{\text{on}}$ are adjusted accordingly, simulating actions taken by system engineers. Across all settings, $\mathcal{D}_{\text{off}}$ is 20,000 units long, while we vary the lengths of $\mathcal{D}_{\text{on}}$ in $\{10, 20, 50, 100, 200\}$.

\subsubsection{Threshold-based system}
\label{subsubsec:TDSCM}

The first data generation process is based on the T-DSCM outlined in Definition \ref{def:tdscm}. We randomly generate 50 T-SCGs, each with 6 vertices, a maximal degree between 4 and 5, and exactly one root vertex. All lags in the associated T-FTCGs are set to 1. Subsequently, we generate one dataset for each T-SCG.
Each time series is scaled to $\left[0, 1\right)$, with thresholds randomly selected from $U([0.7, 0.9])$. Every time series includes a self-cause, and with a probability of $0.3$, $\epsilon^y_t<1$, indicating anomalies may not always trigger in children. For $\mathcal{D}_{\text{off}}$, interventions are applied to each normal variable with a probability $\beta_t^y=0.1$. We ensure no time series remains anomalous for more than 5 consecutive time points. In $\mathcal{D}_{\text{on}}$, if Assumption \ref{assumption:one_intervention} holds, two vertices on the same active path are randomly chosen; otherwise, two vertices on different active paths are selected.

Results depicting the means and variances of the F1-score for each method are presented in Figure~\ref{fig:sim_data}(a) and (d), with and without Assumption \ref{assumption:one_intervention} respectively, where the values of thresholds are presumed to be known. 
Notably, when Assumption~\ref{assumption:one_intervention} holds, T-RCA consistently outperforms other methods, exhibiting low variance. The performance of T-RCA remains stable since it only requires knowledge of the anomalous variables in $\mathcal{D}_{\text{on}}$. However, in the other case, the performance of T-RCA declines due to the violation of Assumption~\ref{assumption:one_intervention}, while T-RCA-agent demonstrates superior performance. All other methods have very low performances. Note that, as the length of $\mathcal{D}_{\text{on}}$ increases, the performance of RCD improves.

\begin{figure*}
\includegraphics[scale=0.8,trim = 0.5cm 3.55cm 1.5cm 0.5cm, clip=TRUE]{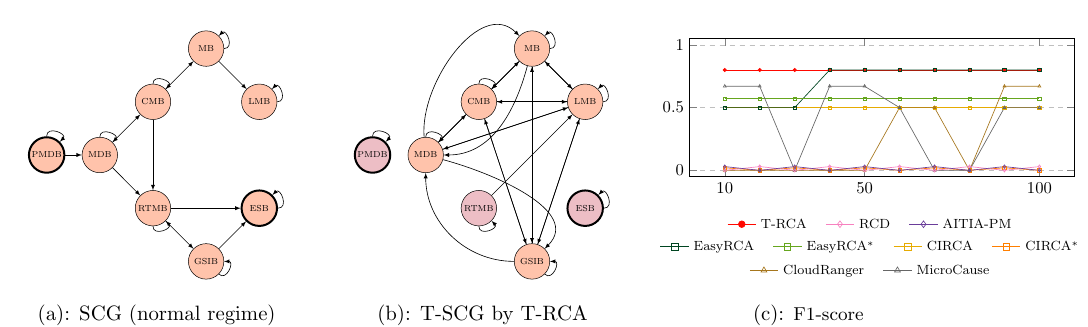}
\caption{Real IT monitoring data: (a) the SCG provided by the experts, on the normal regime, where root causes correspond to the vertices with thick borders (PMDB and ESB); (b) the T-SCG learned by T-RCA, where inferred root causes correspond to purple vertices (PMDB and ESB); (c) F1-score for the IT monitoring data, varying the lengths of $\mathcal{D}_{\text{on}}$ from 10 to 100. }
\label{fig:easyvista}
\end{figure*}

\subsubsection{Non-threshold based system with changes in causal coefficients}
\label{subsubsec:change_coefficients}


To evaluate the robustness of our method, we utilize a dataset introduced by \citet{Assaad_2023} comprising 50 different Acyclic Summary Causal Graphs with self-causes. All lags are set to 1. $\mathcal{D}_{\text{off}}$ is generated using the DSCM as $ 
y_t = \sum_{x_{t-1} \in Pa_{\mathcal{G}_{ft}}(y_t)}a x_{t-1} + 0.1\xi_t^{y},
$
with $a \sim U([0.1, 1])$, $\xi_t^{y} \sim N(0,1)$, $y_t$ denotes the value of the vertex $y$ at time $t$, $Pa_{\mathcal{G}_{ft}}(y_t)$ denotes the direct parents of $y_t$ in the FTCG. For $\mathcal{D}_{\text{on}}$, similar to the previous setting, two different strategies are used to randomly select two vertices in each graph for intervention, which are considered as the root causes. The intervention changes the coefficients from all parents of the intervened variable by resampling them from $ U([0.1, 1])$. The effect of each intervention propagates through the generating process to all the descendants of the intervened vertex.
Thresholds for each time series are chosen empirically such that each variable in $\mathcal{D}_{\text{off}}$ contains $90\%$ of data below the threshold and $10\%$ above it. In this setting, T-RCA and T-RCA-agent utilize thresholds to learn the T-SCG from $\mathcal{D}_{\text{off}}$. The selected root causes and their descendants in the graph are considered as anomalous variables, and this information is utilized by our proposed method, EasyRCA$^*$, CloudRanger, and MicroCause.

Results depicting the means and variances of the F1-score for each method are presented in Figure~\ref{fig:sim_data}  (b) and (e) for both cases. When Assumption~\ref{assumption:one_intervention} holds, EasyRCA$^*$ performs well with low variance, as the data generating process aligns with its settings. The performance of T-RCA follows EasyRCA$^*$, outperforming other methods. In the other case, T-RCA suffers, while the performance of EasyRCA$^*$ increases along with the expansion of the length of $\mathcal{D}_{\text{on}}$. Notably, T-RCA-agent exhibits a comparable performance with EasyRCA$^*$, and even better for a small length of $\mathcal{D}_{\text{on}}$. 


\subsubsection{Non-threshold based system with changes in noise}
\label{subsubsec:change_noise}
Finally, we utilize a dataset simulating a microservice architecture from the DoWhy package\footnote{\url{https://www.pywhy.org/dowhy/v0.11.1/example_notebooks/gcm_rca_microservice_architecture.html}}. The graph consists of 11 vertices, and a lag of 1 is assumed between each pair of vertices to simulate time series data. Similar to the previous setting, two different strategies are used to randomly select two vertices in the graph for intervention. We generate 50 datasets for each case. Interventions are applied following Scenario 3 in the provided link, which involves shifting the value by a constant. Thresholds for each time series are determined by using the empirical strategy described earlier.

Results depicting the means and variances of the F1-score for each method are presented in Figure~\ref{fig:sim_data}  (c) and (f) for both cases. When Assumption~\ref{assumption:one_intervention} holds, CIRCA$^*$ exhibits good performance with a small size of $\mathcal{D}_{\text{on}}$. However, its performance declines as $\mathcal{D}_{\text{on}}$ increases. T-RCA maintains performance comparable to RCD. When Assumption~\ref{assumption:one_intervention} is violated, RCD outperforms other methods once $\mathcal{D}_{\text{on}}$ exceeds 20, followed by T-RCA-agent, T-RCA, and CIRCA$^*$. These methods clearly outperform others in both cases.

\subsection{Real IT monitoring data}
The dataset, provided by EasyVista and introduced in \cite{Assaad_2023}, consists of eight time series collected from an IT monitoring system with a one-minute sampling rate, as described in \cite{Assaad_2023, Ait_Bachir_2023}.

For T-RCA and AITIA-PM methods, which rely on anomalies in historical data for root cause detection, consider all data preceding the onset of anomalies as $\mathcal{D}_\text{off}$, comprising more than 40,000 data points. However, for methods needing only normal data as $\mathcal{D}_\text{off}$, we use 1,000 data points from the normal part before anomalies. We vary the length of $\mathcal{D}_\text{on}$ from 10 to 100 with a step of 10. Thresholds for each time series are selected such that each variable in $\mathcal{D}_{\text{off}}$ contains $90\%$ of data below the threshold and $10\%$ above it.
Each time series is anomalous within $\mathcal{D}_\text{on}$. For CIRCA, we consider only one ultra-metric, \textit{Saturation}. Due to the inability to simulate system engineer interventions, T-RCA-agent is not part of the comparison.

The true T-SCG associated with this system is unknown, but EasyVista’s system experts have described the summary causal graph in the normal regime (when there are no anomalies), which is provided in Figure~\ref{fig:easyvista}(a). Thus, for methods needing a graph, such as CIRCA and EasyRCA, the summary causal graph in the normal regime serves as the input graph. PMDB and ESB are expected to be the root causes of the anomalies. 
The T-SCG learned by T-RCA is given in Figure~\ref{fig:easyvista}(b). The graph is denser than the SCG stated in Figure~\ref{fig:easyvista}(a), but they do not encode the same information. Most importantly, the genuine root causes are discovered.

The F1-score of each method is presented in Figure~\ref{fig:easyvista}(c). T-RCA consistently performs well, achieving an F1-score of 0.8, similar to EasyRCA which knows the SCG in the normal regime, when the length of $\mathcal{D}_\text{on}$ exceeds 30. CIRCA and EasyRCA$^*$ exhibit reasonable performance. Meanwhile, the result also exhibits the importance of the correct causal graph for the method CIRCA and EasyRCA. The performance of CloudRanger and MicroCause varies considerably, influenced by random mechanisms in their respective methods.



\section{Conclusion}
\label{sec:discussion}
We introduced a novel structural causal model tailored for representing threshold-based IT systems and presented the T-RCA algorithm for detecting root causes of anomalies within such systems. In its basic form, T-RCA assumes the absence of more than one root cause in an active path aligned with the derived causal graph. Additionally, we introduced an optimized agent-based extension of T-RCA, relaxing this assumption while necessitating minimal system interventions. Our experiments showcased the superiority of our methods, particularly on data generated from the specified structural causal model, but also on data from alternative models.

For future work, it would be interesting to consider instantaneous relations. In this case,  having richer graphs, which are inherently acyclic, can offer advantages, especially when anomalies form a cycle in the T-SCG. 
Additionally, relaxing the assumption of joint independence among variables in $\mathbb{U}_t$ and $\mathbb{I}_t$ could provide further insights. Finally, exploring solutions that alleviate Assumption \ref{assumption:one_intervention} without requiring external actions is of interest.

\section*{Acknowledgements}
This work was partially supported by EasyVista, by MIAI@Grenoble Alpes (ANR-19-P3IA-0003), and by the CIPHOD project (ANR-23-CPJ1-0212-01).

\bibliographystyle{abbrvnat}
\bibliography{references}

\begin{thebibliography}{20}
\providecommand{\natexlab}[1]{#1}
\providecommand{\url}[1]{\texttt{#1}}
\expandafter\ifx\csname urlstyle\endcsname\relax
  \providecommand{\doi}[1]{doi: #1}\else
  \providecommand{\doi}{doi: \begingroup \urlstyle{rm}\Url}\fi

\bibitem[Assaad et~al.(2022{\natexlab{a}})Assaad, Devijver, and
  Gaussier]{Assaad_2022}
C.~K. Assaad, E.~Devijver, and E.~Gaussier.
\newblock Survey and evaluation of causal discovery methods for time series.
\newblock \emph{Journal of Artificial Intelligence Research}, 73:\penalty0
  767–819, feb 2022{\natexlab{a}}.

\bibitem[Assaad et~al.(2022{\natexlab{b}})Assaad, Devijver, and
  Gaussier]{Assaad_2022PCGCE}
C.~K. Assaad, E.~Devijver, and E.~Gaussier.
\newblock Discovery of extended summary graphs in time series.
\newblock In J.~Cussens and K.~Zhang, editors, \emph{Proceedings of the
  Thirty-Eighth Conference on Uncertainty in Artificial Intelligence}, volume
  180 of \emph{Proceedings of Machine Learning Research}, pages 96--106. PMLR,
  01--05 Aug 2022{\natexlab{b}}.

\bibitem[Assaad et~al.(2023)Assaad, Ez-zejjari, and Zan]{Assaad_2023}
C.~K. Assaad, I.~Ez-zejjari, and L.~Zan.
\newblock Root cause identification for collective anomalies in time series
  given an acyclic summary causal graph with loops.
\newblock In F.~Ruiz, editor, \emph{Proceedings of the Twenty-Sixth Conference
  on Artificial Intelligence and Statistics}, Proceedings of Machine Learning
  Research. PMLR, April 2023.

\bibitem[Aït-Bachir et~al.(2023)Aït-Bachir, Assaad, de~Bignicourt, Devijver,
  Ferreira, Gaussier, Mohanna, and Zan]{Ait_Bachir_2023}
A.~Aït-Bachir, C.~K. Assaad, C.~de~Bignicourt, E.~Devijver, S.~Ferreira,
  E.~Gaussier, H.~Mohanna, and L.~Zan.
\newblock Case studies of causal discovery from it monitoring time series.
\newblock submitted, 2023.

\bibitem[Budhathoki et~al.(2021)Budhathoki, Janzing, Bloebaum, and
  Ng]{Budhathoki_2021}
K.~Budhathoki, D.~Janzing, P.~Bloebaum, and H.~Ng.
\newblock Why did the distribution change?
\newblock In A.~Banerjee and K.~Fukumizu, editors, \emph{Proceedings of The
  24th International Conference on Artificial Intelligence and Statistics},
  volume 130 of \emph{Proceedings of Machine Learning Research}, pages
  1666--1674. PMLR, 13--15 Apr 2021.

\bibitem[Budhathoki et~al.(2022)Budhathoki, Minorics, Bloebaum, and
  Janzing]{Budhathoki_2022}
K.~Budhathoki, L.~Minorics, P.~Bloebaum, and D.~Janzing.
\newblock Causal structure-based root cause analysis of outliers.
\newblock In K.~Chaudhuri, S.~Jegelka, L.~Song, C.~Szepesvari, G.~Niu, and
  S.~Sabato, editors, \emph{Proceedings of the 39th International Conference on
  Machine Learning}, volume 162 of \emph{Proceedings of Machine Learning
  Research}, pages 2357--2369. PMLR, 17--23 Jul 2022.

\bibitem[Dani et~al.(2015)Dani, Jollois, Nadif, and Freixo]{dani2015adaptive}
M.-C. Dani, F.-X. Jollois, M.~Nadif, and C.~Freixo.
\newblock Adaptive threshold for anomaly detection using time series
  segmentation.
\newblock In \emph{Neural Information Processing: 22nd International
  Conference, ICONIP 2015, Istanbul, Turkey, November 9-12, 2015, Proceedings
  Part III 22}, pages 82--89. Springer, 2015.

\bibitem[Ikram et~al.(2022)Ikram, Chakraborty, Mitra, Saini, Bagchi, and
  Kocaoglu]{RCD}
A.~Ikram, S.~Chakraborty, S.~Mitra, S.~Saini, S.~Bagchi, and M.~Kocaoglu.
\newblock Root cause analysis of failures in microservices through causal
  discovery.
\newblock In S.~Koyejo, S.~Mohamed, A.~Agarwal, D.~Belgrave, K.~Cho, and A.~Oh,
  editors, \emph{Advances in Neural Information Processing Systems}, volume~35,
  pages 31158--31170. Curran Associates, Inc., 2022.

\bibitem[Li et~al.(2022)Li, Li, Yin, Nie, Zhang, Sui, and Pei]{Li_2022}
M.~Li, Z.~Li, K.~Yin, X.~Nie, W.~Zhang, K.~Sui, and D.~Pei.
\newblock Causal inference-based root cause analysis for online service systems
  with intervention recognition.
\newblock In \emph{Proceedings of the 28th ACM SIGKDD Conference on Knowledge
  Discovery and Data Mining}, KDD '22, page 3230–3240, New York, NY, USA,
  2022. Association for Computing Machinery.
\newblock ISBN 9781450393850.
\newblock \doi{10.1145/3534678.3539041}.

\bibitem[Ligus(2013)]{Ligus_2013}
S.~Ligus.
\newblock \emph{Effective Monitoring and Alerting}.
\newblock O'Reilly, 2013.
\newblock ISBN 9781449333522.

\bibitem[Meng et~al.(2020)Meng, Zhang, Sun, Zhang, Hu, Zhang, Jia, Wang, and
  Pei]{Meng_2020}
Y.~Meng, S.~Zhang, Y.~Sun, R.~Zhang, Z.~Hu, Y.~Zhang, C.~Jia, Z.~Wang, and
  D.~Pei.
\newblock Localizing failure root causes in a microservice through causality
  inference.
\newblock In \emph{2020 IEEE/ACM 28th International Symposium on Quality of
  Service (IWQoS)}, pages 1--10, 2020.
\newblock \doi{10.1109/IWQoS49365.2020.9213058}.

\bibitem[Pearl(1988)]{Pearl_1988}
J.~Pearl.
\newblock \emph{Probabilistic Reasoning in Intelligent Systems: Networks of
  Plausible Inference}.
\newblock Morgan Kaufmann, 1988.

\bibitem[Pearl(2000)]{Pearl_2000}
J.~Pearl.
\newblock \emph{Causality: Models, Reasoning, and Inference}.
\newblock Cambridge University Press, New York, NY, USA, 2000.
\newblock ISBN 0-521-77362-8.

\bibitem[Rudnitckaia(2016)]{rudnitckaia2016process}
J.~Rudnitckaia.
\newblock Process mining. data science in action.
\newblock \emph{University of Technology, Faculty of Information Technology},
  pages 1--11, 2016.

\bibitem[Runge et~al.(2019)Runge, Nowack, Kretschmer, Flaxman, and
  Sejdinovic]{Runge_2019}
J.~Runge, P.~Nowack, M.~Kretschmer, S.~Flaxman, and D.~Sejdinovic.
\newblock Detecting and quantifying causal associations in large nonlinear time
  series datasets.
\newblock \emph{Science Advances}, 5\penalty0 (11):\penalty0 eaau4996, 2019.
\newblock \doi{10.1126/sciadv.aau4996}.

\bibitem[Spirtes et~al.(2000)Spirtes, Glymour, Scheines, and
  Heckerman]{Spirtes_2000}
P.~Spirtes, C.~N. Glymour, R.~Scheines, and D.~Heckerman.
\newblock \emph{Causation, prediction, and search}.
\newblock MIT press, 2000.

\bibitem[Van~Houdt et~al.(2021)Van~Houdt, Depaire, and Martin]{van2021root}
G.~Van~Houdt, B.~Depaire, and N.~Martin.
\newblock Root cause analysis in process mining with probabilistic temporal
  logic.
\newblock In \emph{International Conference on Process Mining}, pages 73--84.
  Springer, 2021.

\bibitem[Wang et~al.(2023{\natexlab{a}})Wang, Chen, Ni, Tong, Wang, Fu, and
  Chen]{Wang2023reason}
D.~Wang, Z.~Chen, J.~Ni, L.~Tong, Z.~Wang, Y.~Fu, and H.~Chen.
\newblock Interdependent causal networks for root cause localization.
\newblock In \emph{Proceedings of the 29th ACM SIGKDD Conference on Knowledge
  Discovery and Data Mining}, KDD '23, page 5051–5060, New York, NY, USA,
  2023{\natexlab{a}}. Association for Computing Machinery.

\bibitem[Wang et~al.(2023{\natexlab{b}})Wang, Zhang, Ding, Xu, Chen, Zou, Chen,
  Zhang, Gao, Fan, Rajmohan, Lin, and Zhang]{HRLHF}
L.~Wang, C.~Zhang, R.~Ding, Y.~Xu, Q.~Chen, W.~Zou, Q.~Chen, M.~Zhang, X.~Gao,
  H.~Fan, S.~Rajmohan, Q.~Lin, and D.~Zhang.
\newblock Root cause analysis for microservice systems via hierarchical
  reinforcement learning from human feedback.
\newblock In \emph{Proceedings of the 29th ACM SIGKDD Conference on Knowledge
  Discovery and Data Mining}, KDD '23, page 5116–5125, New York, NY, USA,
  2023{\natexlab{b}}. Association for Computing Machinery.

\bibitem[Wang et~al.(2018)Wang, Xu, Ma, Lin, Pan, Wang, and
  Chen]{wang2018cloudranger}
P.~Wang, J.~Xu, M.~Ma, W.~Lin, D.~Pan, Y.~Wang, and P.~Chen.
\newblock Cloudranger: Root cause identification for cloud native systems.
\newblock In \emph{2018 18th IEEE/ACM International Symposium on Cluster, Cloud
  and Grid Computing}, pages 492--502. IEEE, 2018.

\end{thebibliography}

\appendix







\section{Pseudo-code of T-RCA}
The pseudo-code of T-RCA is presented in Algorithm~\ref{algo:TRCA}. 
The algorithm outlined begins at line 1 by converting d-dimensional observational time series from two datasets, $\mathcal{D}_{off}$ and $\mathcal{D}_{on}$, into binary thresholded series using predefined thresholds ($\mathbb{R}^d$). 
In line 2, using an event-based causal discovery algorithm on $\mathcal{D}_{off}$, a T-SCG is constructed. 
In line 3 and 4, from $\mathcal{D}_{on}$, anomalous vertices are identified and their times of appearance are mapped, employing Definitions 5 and 7 respectively. 
In line 5, the algorithm initializes an empty set for root causes, $\mathbb{\hat{C}}$. 
In lines 6-11, the algorithm analyzes each SCC within each anomalous subgraph. If all parents of an SCC are contained within the same SCC and it consists of a single vertex, that vertex is added to $\mathbb{\hat{C}}$ as per Lemma 2. For SCCs containing multiple vertices, the vertex with the earliest appearance time of anomaly is selected for $\mathbb{\hat{C}}$.

\begin{algorithm}[h]
	\caption{T-RCA}
	\label{algo:TRCA}
	\DontPrintSemicolon
	\KwData{Two datasets $\mathcal{D}_{\text{off}}$ and $\mathcal{D}_{\text{on}}$ of $d$-dimensional observational time series $\mathbb{T}$, maximum lag $\gamma_{\max}$, set of thresholds for each time series $\mathbb{R}^d$}
	\KwResult{Set of root causes $\mathbb{\hat C}$}
	Provide the binary thresholded time series according to thresholds $\mathbb{R}^d$\\
	Discover the T-SCG $\mathcal{\hat G}=(\mathbb{V}^r, \mathbb{E}^r)$ using an event-based causal discovery algorithm on $\mathcal{D}_{\text{off}}$ and $\mathbb{R}^d$ \tcp*{Lemma~\ref{lemma:identifiability_of_TFTCG}}
	Deduce the set of anomalous vertices $\mathbb{A}\subseteq \mathbb{V}^r$ from $\mathcal{D}_{\text{on}}$ using Definition~\ref{def:anomaly_TDSCM_version}\;
	Deduce the mapping of appearance time of anomalies $\mathbb{\tau}$ from $\mathcal{\hat G}$ and $\mathcal{D}_{\text{on}}$  using Definition~\ref{def:appearance_time}\; 
	$\mathbb{\hat C} = \emptyset$\;
	\ForEach{SCC $\mathbb{S}\in \mathcal{\hat 
			G}_{\mathbb{A}}$}{
		\If{$Pa_{\mathcal{\hat 
					G}_{\mathbb{A}}}(\mathbb{S}) \subseteq \mathbb{S}$}{
			\If{size$(\mathbb{S})=1$}{
				$\mathbb{\hat C}$= $\mathbb{\hat C} \cup \mathbb{S}$ \tcp*{Lemma~\ref{lemma:root_causes_forwards}}
			}
			\Else{
				$\mathbb{\hat C}$= $\mathbb{\hat C} \cup \{X^{\ge r_x}\}$ such that $X^{\ge r_x}\in \mathbb{S}$ and $\tau(X^{\ge r_x})<\tau(Y^{\ge r_y})$ for all $\qquad$ $Y^{\ge r_y} \in \mathbb{S}\backslash\{X^{\ge r_x}\}$ \tcp*{Lemma~\ref{lemma:root_causes_SCC}}
			}
		}
	}
\end{algorithm}

\section{Examples of Insufficiency of correlation and time}
\label{app:ex}

It might appear that under Assumption~\ref{assumption:one_intervention}, the time of anomaly and/or the association (possibly conditional) between anomalies might be sufficient to detect root causes.
We provide two examples showing that this is, in fact, not true,  highlighting the crucial role of the causal discovery step.

\begin{example}[Insufficiency of time and dependence]
	Consider a scenario involving three time series, where the underlying T-SCG among them are depicted in Figure~\ref{fig:counter_examples} (a). The lag between $X$ and $Y$ is 1, while the lag between $X$ and $Z$ is 2. Assuming  $\epsilon^y_t<1$ and ignoring self-causes, multiple interventions are applied to $X$ at different time steps to induce anomalies.
	
	When identifying the root cause(s) of $Z^{\ge r_z}=1$ (denoted as $Z^{\ge r_z}_{=1}$) solely based on temporal information, both $X^{\ge r_x}_{=1}$ and $Y^{\ge r_y}_{=1}$ are included, as they occur before $Z^{\ge r_z}_{=1}$. Similarly, solely relying on dependence also points to both $X^{\ge r_x}_{=1}$ and $Y^{\ge r_y}_{=1}$ as root causes, given their connections to $Z^{\ge r_z}_{=1}$. Even when considering temporal and dependence information together, $Y^{\ge r_y}_{=1}$ remains identified as the root cause. 
	
\end{example}


\begin{figure}[!ht]
	\centering
	\begin{subfigure}{0.45\textwidth}
		\centering
			
			
			

		\includegraphics[trim = 0 1cm 0 1cm, clip=TRUE]{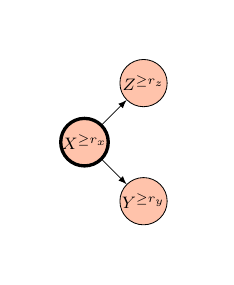} 
		\label{fig:counter_example_1}
		\caption{Illustration for example 1}
	\end{subfigure}
	\hfill 
	\begin{subfigure}{0.45\textwidth}
		\centering
			
			
			
			
			
		\includegraphics[trim = 0 1cm 0 1cm, clip=TRUE]{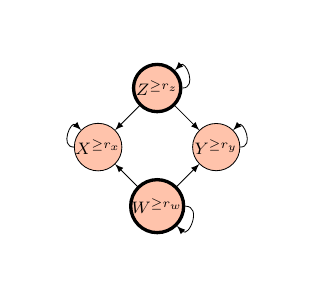} 
		\caption{Illustration for example 2}
		\label{fig:counter_example_2}
	\end{subfigure}
	\caption{(a) demonstrates a scenario where time and dependence fail to detect all root causes. (b) demonstrates a scenario where time and conditional dependence on a single variable fail to detect all root causes.}
	\label{fig:counter_examples}
\end{figure}

\begin{example}[Insufficiency of time and conditional dependence on a single variable]
	
	Consider another scenario involving four time series, where the underlying T-SCG among them is depicted in Figure~\ref{fig:counter_examples} (b). The lag between $W$, $X$ and $Z$, $X$ is 1, while the lag between $W$, $Y$ and $Z$, $Y$ is 2. Assuming $\epsilon^y_t<1$ ($i.e.$, uncertainty in anomaly propagation) and ignoring self-causes,  multiple interventions are applied independently on $W$ and $Z$ at different time steps to induce anomalies. Here, we aim to detect all root causes of $Y^{\ge r_y}=1$ (denoted as $Y^{\ge r_y}_{=1}$).
	
	Firstly, using temporal order, $W^{\ge r_w}_{=1}$, $X^{\ge r_x}_{=1}$ and $Z^{\ge r_z}_{=1}$ are considered as potential causes of $Y^{\ge r_y}_{=1}$ because they precede it. 
	
	
	Then, using dependence conditioning on a single variable cannot guarantee the detection of root causes. For example, when conditioning on $W^{\ge r_w}_{=1}$ (resp. $Z^{\ge r_z}_{=1}$), $X^{\ge r_x}_{=1}$ becomes dependent on $Y^{\ge r_y}_{=1}$, because of $Z^{\ge r_z}_{=1}$ (resp. $W^{\ge r_w}_{=1}$). Consequently, $X^{\ge r_x}_{=1}$ is incorrectly considered a root cause. 
	
	Even using a special measure that sums over multiple conditional dependence,  we will not be able to guarantee the detection of root causes: as previously, between $X^{\ge r_x}_{=1}$ and $Y^{\ge r_y}_{=1}$, even though we do a sum of multiple terms of conditional dependence, but within each term, we only condition on one variable, which will not break the dependence between these two variables.
	
	Furthermore, even if we search for the highest conditional dependence, as  done in AITIA-PM~\citep{van2021root}, there is no guarantee to find all root causes. For instance, consider again the T-SCG in Figure~\ref{fig:counter_examples} (b), but now assume $\epsilon^y_t=1$. Then, the special measure (used by AITIA-PM) that sums over multiple conditional dependence between $Y^{\ge r_y}_{=1}$ and $X^{\ge r_x}_{=1}$ is :
	{\small
		\begin{align*}    
			\{ &\Pr(Y^{\ge r_y}_{=1} |X^{\ge r_x}_{=1} \wedge W^{\ge r_w}_{=1}) -\Pr(Y^{\ge r_y}_{=1} |X^{\ge r_x}_{=0} \wedge W^{\ge r_w}_{=1}) \\
			&+ \Pr(Y^{\ge r_y}_{=1} |X^{\ge r_x}_{=1}\wedge Z^{\ge r_z}_{=1}) -\Pr(Y^{\ge r_y}_{=1} |X^{\ge r_x}_{=0} \wedge Z^{\ge r_z}_{=1})\}
			= 2
		\end{align*}
	}
	which is higher than the same special measure between $Y^{\ge r_y}_{=1}$ and $Z^{\ge r_z}_{=1}$ equal to:
	{\small 
		\begin{align*}
			\{ &\Pr(Y^{\ge r_y}_{=1} |W^{\ge r_w}_{=1} \wedge Z^{\ge r_z}_{=1}) -\Pr(Y^{\ge r_y}_{=1} |W^{\ge r_w}_{=0} \wedge Z^{\ge r_z}_{=1}) \\
			&+ \Pr(Y^{\ge r_y}_{=1} |W^{\ge r_w}_{=1}\wedge X^{\ge r_x}_{=1}) -\Pr(Y^{\ge r_y}_{=1} |W^{\ge r_w}_{=0} \wedge X^{\ge r_x}_{=1})\}
			= 0.
		\end{align*}
	}

\end{example}





\section{Complementary experiments}
\label{app:exp}

\subsection{Extension of Section~\ref{subsec:sim_data} - varying the length of $\mathcal{D}_{on}$}
\label{app:exten_simu_data}

In Figure~\ref{fig:sim_data_complete}, we present the results of the means and variances of the F1-score for each method, considering a larger length of $\mathcal{D}_{\text{on}}$ up to 2,000 samples. These results align with the three settings delineated in Section~\ref{subsec:sim_data}. In addition, in this analysis, we include the results of EasyRCA and CIRCA, which require a causal graph as input.  In general, EasyRCA shows comparable performance to T-RCA but falls slightly behind T-RCA-agent. The good performance of EasyRCA is not surprising since it relies on an input graph. Nevertheless, it is noteworthy that T-RCA consistently surpasses CIRCA in performance, which was unexpected.

\begin{figure}
	\centering
	\includegraphics[width = 0.9\textwidth, trim = 0 1.4cm 0 1.5cm, clip = TRUE]{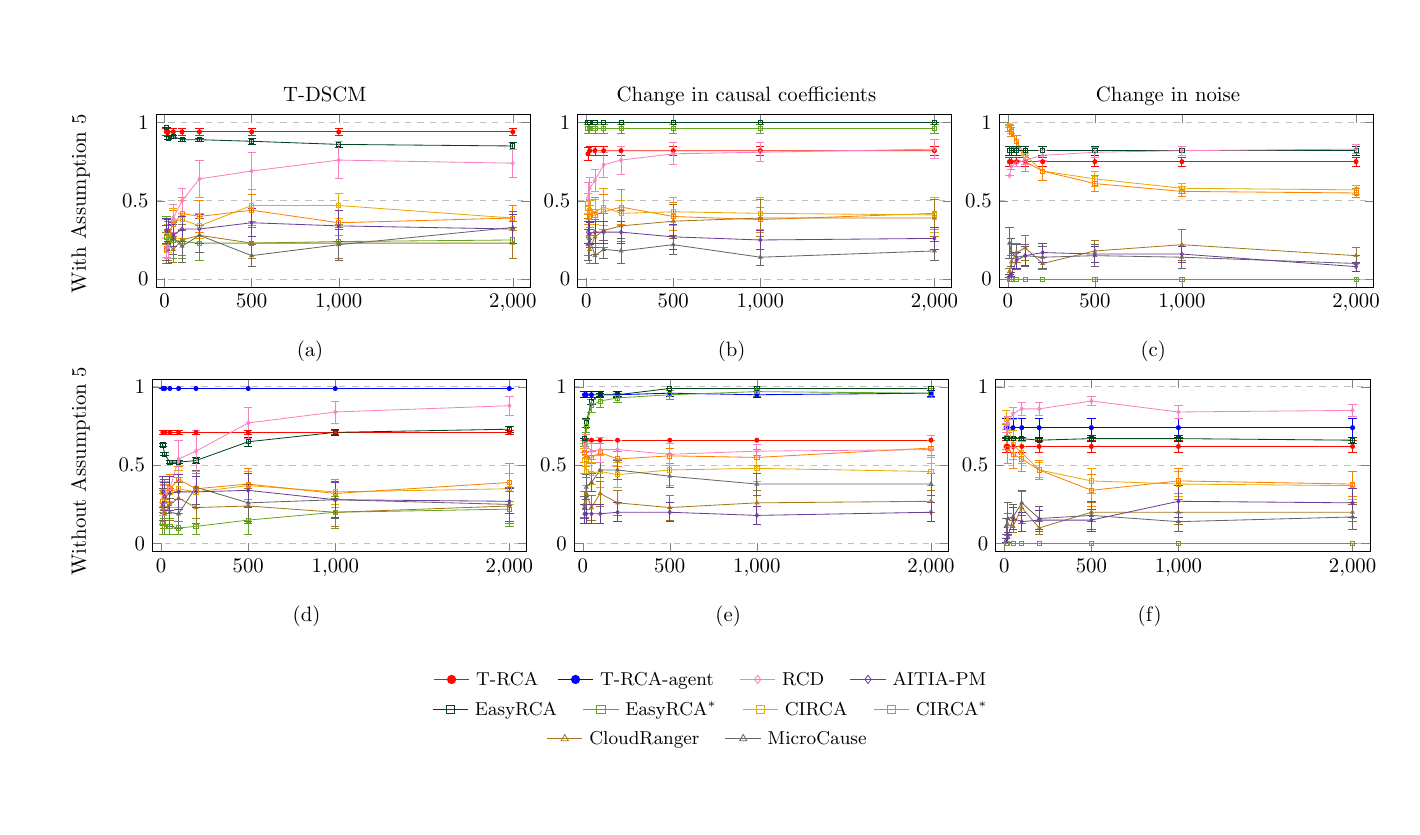}
	\caption{Mean of F1-score averaged over 50 simulations, and associated variance, for the three data generating processes, the lengths of $\mathcal{D}_{\text{on}}$ varying from 10 to 2,000.}
	\label{fig:sim_data_complete}
\end{figure}

\subsection{Extension of Section~\ref{subsubsec:TDSCM} - new data generating process}
\label{app:exten_TDSCM}

\begin{figure}
	\centering
	\includegraphics[width = 0.9\textwidth, trim = 0 1.4cm 0 1.5cm, clip=TRUE]{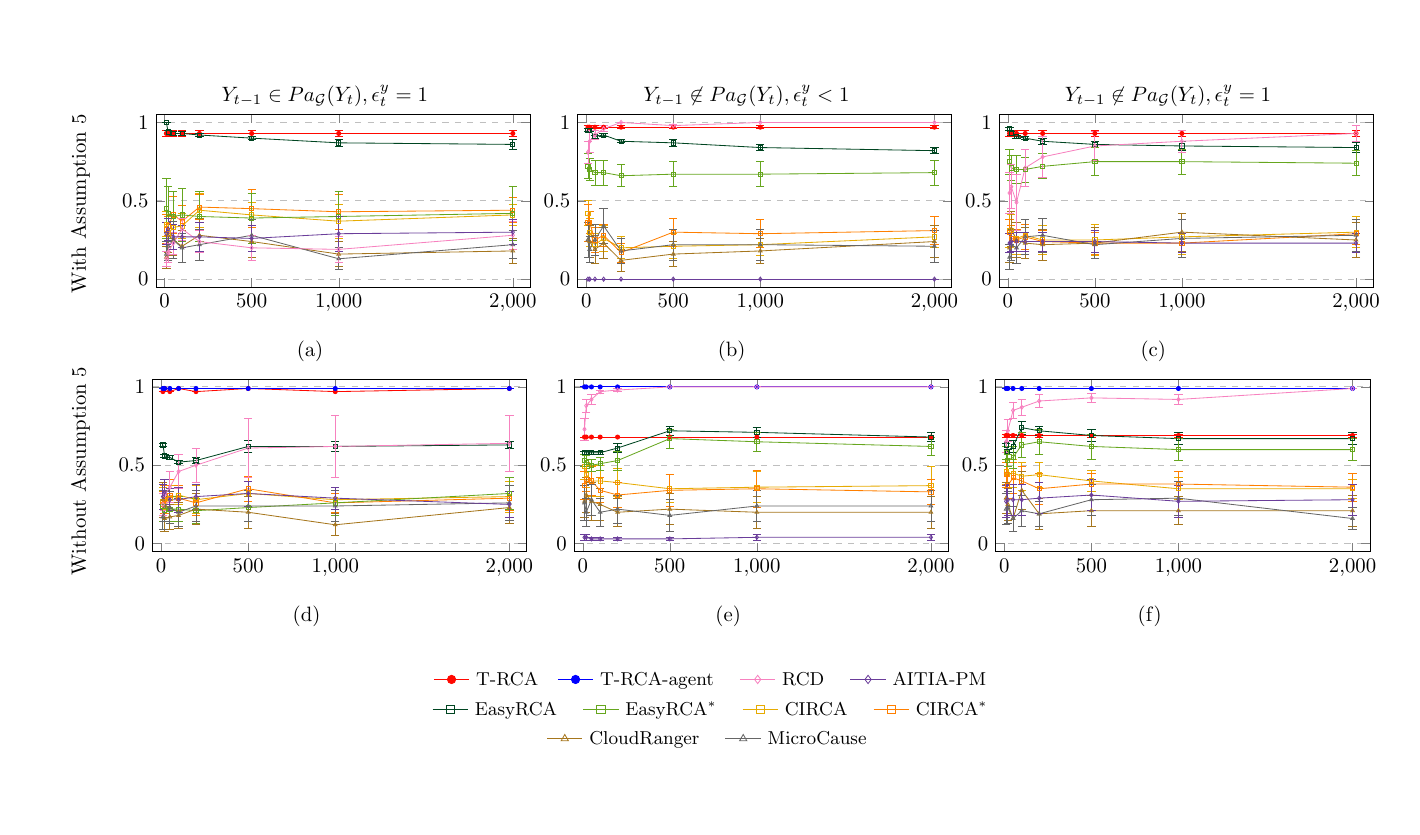}
	\caption{Mean of F1-score averaged over 50 simulations, and associated variance, for other three settings based on T-DSCM, the lengths of $\mathcal{D}_{\text{on}}$ varying from 10 to 2,000.}
	\label{fig:TDSCM_complete}
\end{figure}

In addition to the setting discussed in Section~\ref{subsubsec:TDSCM}, we explore three other settings based on T-DSCM, varying the lengths of the online set, $\mathcal{D}_\text{on}$, from 10 to 2,000 samples. Firstly, we distinguish if self-causes are considered for each vertex in the T-SCG or not (denoted as $Y_{t-1}\in Pa_{\mathcal{G}}(Y_t)$ or $Y_{t-1}\not\in Pa_{\mathcal{G}}(Y_t)$). Secondly, we distinguish $\epsilon^y_t=1$ ($i.e.$, certainty in anomaly propagation, referred to as $\epsilon^y_t=1$) and $\epsilon^y_t<1$ ($i.e.$, uncertainty in anomaly propagation, referred to as $\epsilon^y_t=1-0.3^{|Pa_{\mathcal{G}_{\text{ft}}}(Y_t)\cap \mathbb{A}|}$). The results are presented in Figure~\ref{fig:TDSCM_complete} which shows that when Assumption~\ref{assumption:one_intervention} is satisfied, overall T-RCA performs best followed by EasyRCA and RCD. When Assumption~\ref{assumption:one_intervention} is violated, T-RCA-agent performs best followed by T-RCA when $y_{t-1}\in Pa_{\mathcal{G}_{ft}}(y_t)$ and $\epsilon^y_t=1$ and by RCD in the other cases.

\subsection{Execution time analysis}
\label{app:time_analysis_simu_data}

Based on the setting discussed in Section~\ref{subsubsec:TDSCM}, where Assumption~\ref{assumption:one_intervention} holds, the execution time of each method is analyzed with $\mathcal{D}_{\text{on}}$ lengths varying within the range of [50, 500, 2000]. Additionally, the online part of T-RCA (T-RCA$_{online}$) and the offline part of T-RCA (T-RCA$_{offline}$) are analyzed separately. 

The mean of the logarithm of execution time (plus 1) in seconds, averaged over 50 simulations, along with the associated variance for each method, is presented in Figure~\ref{fig:execution_time_analysis}. The results indicate that the execution time of the offline part of T-RCA (T-RCA$_{offline}$) is the fastest among methods that need to reconstruct a graph based on $\mathcal{D}_{\text{off}}$. In contrast, the execution time of the online part of T-RCA (T-RCA$_{online}$) is negligible.

\begin{figure}[!ht]
	\centering
	\includegraphics[width= 0.7\textwidth, trim = 1.cm 1cm 0.5cm 0.5cm, clip=TRUE]{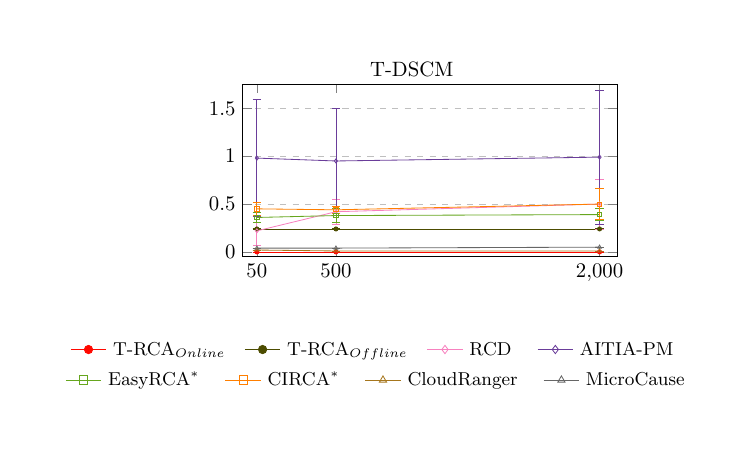}
	\caption{Mean of the logarithm of execution time (plus 1) in seconds, averaged over 50 simulations, along with the associated variance, is presented for the data-generating processes based on T-DSCM under Assumption~\ref{assumption:one_intervention}. The analysis separately considers the online part of T-RCA (T-RCA$_{online}$) and the offline part of T-RCA (T-RCA$_{offline}$), with $\mathcal{D}_{\text{on}}$ lengths varying within the range of [50, 500, 2000].}
	\label{fig:execution_time_analysis}
\end{figure}

\subsection{Robustness according to the choice of the threshold- equal thresholds}\label{app:rob} 
\subsubsection{Simulated data}
\label{app:robu_simu_data}
For our proposed methods, T-RCA and T-RCA-agent, thresholds are required for binarizing each time series. Here, we aim to investigate how changes in thresholds impact the performance of our proposed methods across the same cases outlined in Section~\ref{subsec:sim_data}. We vary the threshold for each variable by controlling the proportion of data below this threshold in $\mathcal{D}_{\text{off}}$ from 0.8 to 0.98 in steps of 0.02. Specifically, a proportion of 0.8 implies that the threshold for each time series is chosen to ensure that 80$\%$ of data in $\mathcal{D}_{\text{off}}$ are smaller than this threshold. 

In the setting of T-DSCM, results depicting the means and variances of the F1-scores are presented in Figure~\ref{fig:robu_simu_data} (a and d), where the dashed line illustrates the performance of the method when the thresholds are correctly chosen for each time series. However, for the other two settings, true thresholds for the time series do not exist, hence there are no dashed lines in the corresponding results. The solid line illustrates the performance of the method with varying thresholds, which does not surpass the dashed line for each method. This is because, during the data generating process, the threshold is randomly chosen for each time series, potentially resulting in different thresholds in practice. However, for simplicity, we adopt the same rule to choose the threshold for each time series, which does not guarantee that the threshold for each variable is properly chosen. It is worth noting that in the setting based on T-DSCM, correctly chosen thresholds aid our proposed method in detecting root causes. As the relationship between two variables in this setting depends on thresholds, choosing thresholds higher than the correct one reduces examples for the relations where anomaly causes anomaly, thereby decreasing the performance of our method. Similarly, choosing thresholds lower than the correct one includes examples that do not accurately represent the relations where anomaly causes anomaly, also reducing the performance of our method. Overall, in this setting, the performance of our proposed methods fluctuates with changes in thresholds, but generally, the variance remains within an acceptable range. Practically, the threshold value associated with the highest performance of our proposed methods can serve as a reference for selecting the optimal threshold to monitor the time series.

For settings corresponding to Section~\ref{subsubsec:change_coefficients} and Section~\ref{subsubsec:change_noise}, results depicting the means and variances of the F1-scores are presented in Figure~\ref{fig:robu_simu_data} (b and e) and Figure~\ref{fig:robu_simu_data} (c and f), respectively. In these settings, the performance of our proposed methods remains consistent with low variance when varying the thresholds, as the causal mechanisms in these two settings are continuous.

From the results above, we can conclude that if the causal mechanism is based on thresholds, such as event-based relations, misspecification of the threshold for time series will degrade the performance of our proposed method. However, when the causal mechanism is continuous, misspecification of the threshold for time series will not significantly impact the performance of our proposed method.

\begin{figure}
	\centering
	\includegraphics[width = 0.9\textwidth, trim = 0 1.5cm 0 1cm, clip=TRUE]{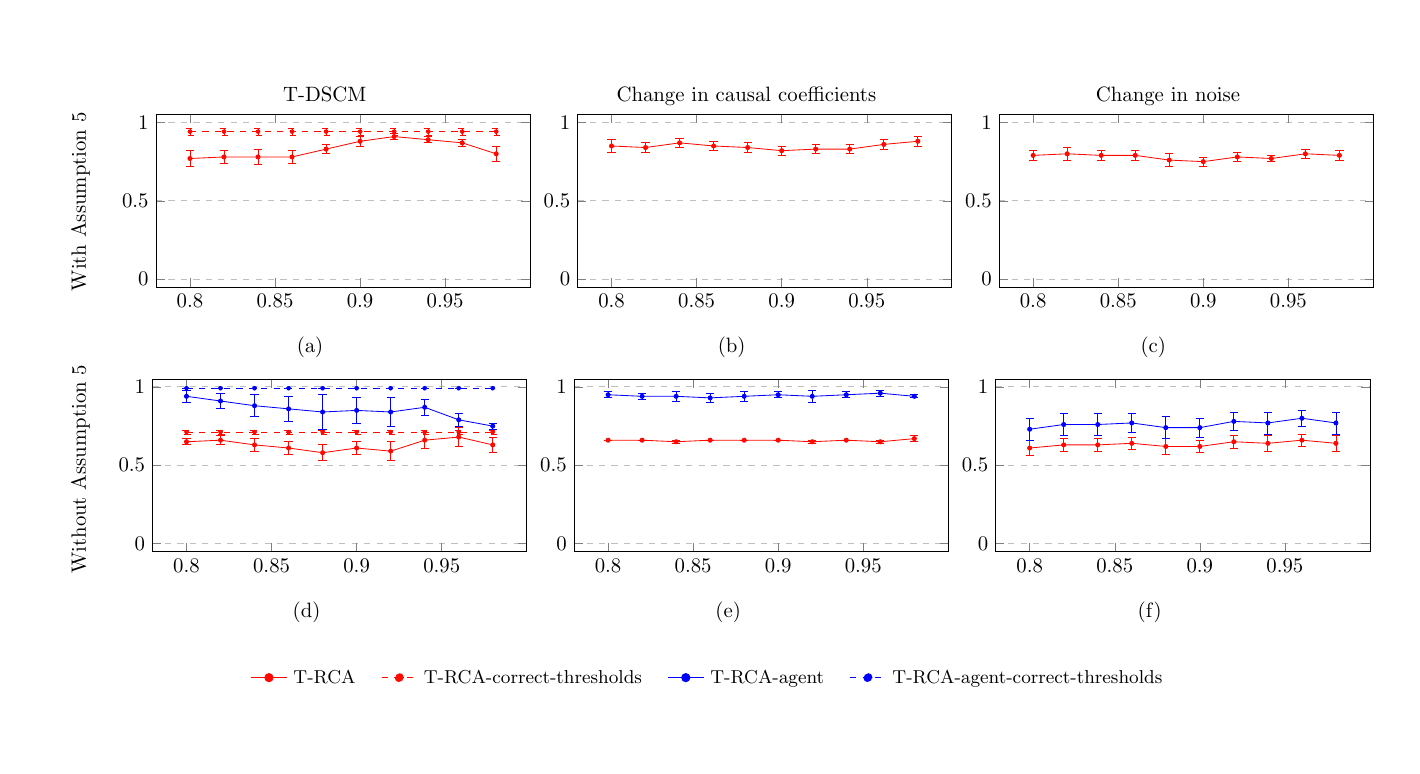}
	
	\caption{Mean of the F1-score, averaged over 50 simulations, and associated variance, for the three data generating processes of our proposed methods. We vary the threshold for each time series by controlling the proportion of data smaller than this threshold in $\mathcal{D}_{\text{off}}$ from 0.8 to 0.98 in steps of 0.02. The dashed line represents the performance of the methods when the thresholds of time series are correctly chosen}
	\label{fig:robu_simu_data}
\end{figure}

\begin{figure}[b]
	\centering
	\includegraphics[width = 0.5\textwidth, trim = 2cm 2.2cm 1.5cm 2cm, clip=TRUE]{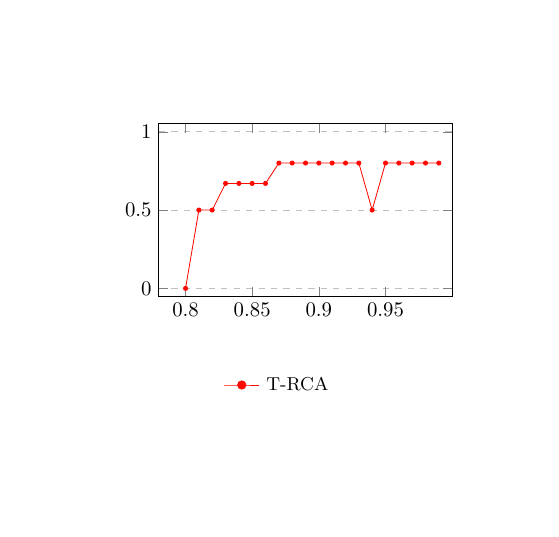}
	\caption{F1-score on Real IT monitoring data for our method, varying the threshold for each time series by controlling the proportion of data smaller than this threshold in $\mathcal{D}_{\text{off}}$ from 0.8 to 0.98 in steps of 0.01.}
	\label{fig:robu_real_data}
\end{figure}

\subsubsection{Real IT monitoring data}
\label{app:robu_monitoring_data}
Similarly, we vary the threshold for each time series by controlling the proportion of data smaller than this threshold in $\mathcal{D}_{\text{off}}$ from 0.8 to 0.98 in steps of 0.01 on real IT monitoring data, and the F1-score is shown in Figure~\ref{fig:robu_real_data}. When thresholds are chosen much lower than the correct one, examples that inaccurately represent the relations where anomaly causes anomaly are included, leading to a decline in the performance of T-RCA. Then, with the increasing of thresholds, the performance of T-RCA tends to increase. Except, at the point where the proportion is 0.94, the performance of the method exhibits significant variance compared to nearby points, as we adopt the same rule to choose the threshold for each time series for simplicity. However, in practice, the correct threshold for each time series may vary. The result also confirms, on the other hand, that our system is threshold-based. 

\subsection{Robustness according to the choice of the threshold - different thresholds}\label{app:rob2} 

\subsubsection{Simulated data}
We also selected an alternative strategy to test the robustness of our method by varying the thresholds. Specifically, we vary the thresholds for each time series by offsetting them from the correct thresholds, ranging from -1 to 1 in steps of 0.01.

The results given in Figure~\ref{fig:different_thres_simu_data} which shows that when the specified threshold is a higher than the true threshold then T-RCA (and T-RCA-agent) manage to keep a good performance. This means that when a system expert is hesitating between two different values (but not significantly different) for the threshold, it is better to choose the higher one.

\begin{figure}[!ht]
	\centering
	\includegraphics[width= 0.5\textwidth, trim = 1.cm 0.5cm 0.5cm 0.5cm, clip=TRUE]{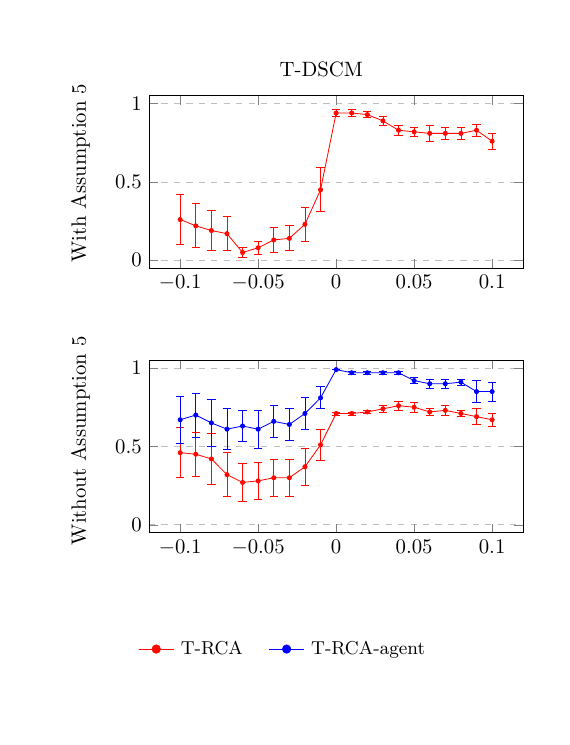}
	\caption{Mean of the F1-score, averaged over 50 simulations, and associated variance, for the data generating processes based on T-DSCM of our proposed methods. We varied the thresholds for each time series by offsetting them from the correct thresholds, ranging from -1 to 1 in steps of 0.01.}
	\label{fig:different_thres_simu_data}
\end{figure}

\subsubsection{Real IT monitoring data}\label{app:tuning} 
Since we do not have the true thresholds for each time series, we start here by fine-tuning the thresholds that gives the best results then we perform the same analysis that was done for simulated data.

\textbf{Fine-tuning thresholds}
To fine-tune the thresholds we tested randomly several threshold and selected the one that gives the highest F1-score.


Surprisingly, we were able to find several sets of thresholds for which T-RCA can identify the root causes with an F1-score equals to 1. This might confirms, that our system is threshold-based.

\textbf{Varying thresholds with respect to the fine-tuned thresholds} Here we choose the thresholds that correspond to the smallest values for which T-RCA can identify the root causes with an F1-score equals to 1.

As before, we vary the thresholds for all time series by iteratively decreasing the fine-tuned thresholds by 0.01 or increasing them by 0.01, repeated five times on each side. However, an exception is made for the time series RTMB, where we decrease and increase the thresholds by 0.0001. This exception is warranted as all values of this time series are smaller than 0.01.

\begin{figure}[!ht]
	\centering
	\includegraphics[width = 0.5\textwidth, trim = 2cm 2.2cm 1.5cm 2cm, clip=TRUE]{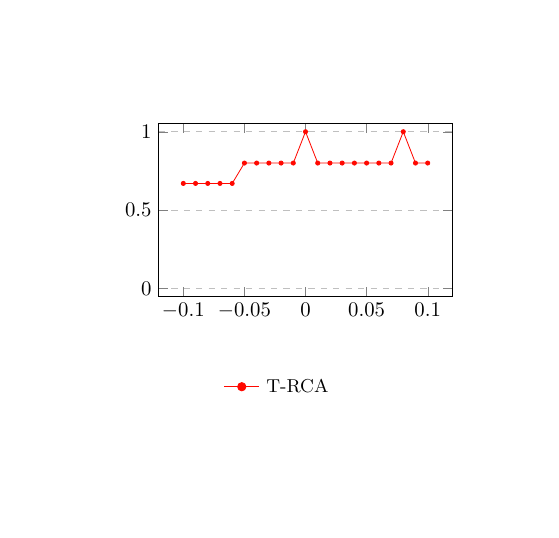}
	\caption{F1-score on Real IT monitoring data for our method, varying the threshold for each time series by offsetting them from the fine-tuning thresholds. We varied the thresholds from -1 to 1 in steps of 0.01. Specifically, for RTMB, due to its small values, we adjusted its threshold from -0.001 to 0.001 in steps of 0.0001.}
	\label{fig:different_thres_real_data}
\end{figure}

The results given in Figure~\ref{fig:different_thres_real_data}, again which again shows that if a system expert is undecided between two slightly different threshold values, it is better to opt for the higher one.

\end{document}